\def\eqref#1{equation~\ref{#1}}
\def\1{\bm{1}}
\DeclareMathAlphabet{\mathsfit}{\encodingdefault}{\sfdefault}{m}{sl}
\SetMathAlphabet{\mathsfit}{bold}{\encodingdefault}{\sfdefault}{bx}{n}
\title{Illusory Attacks: Information-Theoretic Detectability Matters in Adversarial Attacks}
\author{Tim Franzmeyer$^\dag$\thanks{ frtim@robots.ox.ac.uk} \\
\And
Stephen McAleer$^\ddagger$
\And
Joao F. Henriques$^\dag$ 
\And 
Jakob Foerster$^\dag$
\AND
\hspace{50pt} Philip Torr$^\dag$ \hspace{25pt} Adel Bibi$^\dag$ \hspace{25pt} Christian Schroeder de Witt$^\dag$\\
\\
\hspace{70pt} $^\dag$University of Oxford \hspace{25pt}$^\ddagger$Carnegie Mellon University \\
}
\newcommand{\strans}{p}
\newcommand{\eattacks}{$\epsilon$-illusory attacks}
\newcommand{\eattack}{$\epsilon$-illusory attack}
\newcommand{\eattacked}{$\epsilon$-illusory attacked}
\newcommand{\victimpol}{\pi_v}
\newcommand{\advpol}{\nu}
\newcommand{\OLD}[1]{}
\newcommand{\NEW}[1]{{#1}}
\newcommand*{\Scale}[2][4]{\scalebox{#1}{$#2$}}%
\newcommand*{\Placeholder}{\Scale[0.5]{(\cdot)}}%
\newcommand\rotninety{\multicolumn{1}{R{90}{1em}}}%
\newcolumntype{R}[2]{%
    >{\adjustbox{angle=#1,lap=\width-(#2)}\bgroup}%
    l%
    <{\egroup}%
}
\begin{document}

\maketitle

\begin{abstract}
Autonomous agents deployed in the real world need to be robust against adversarial attacks on sensory inputs. 
Robustifying agent policies requires anticipating the strongest attacks possible.
We demonstrate that existing observation-space attacks on reinforcement learning agents have a common weakness: while effective, their lack of information-theoretic detectability constraints makes them \textit{detectable} using automated means or human inspection. 
Detectability is undesirable to adversaries as it may trigger security escalations.
We introduce \textit{\eattacks{}}, a novel form of adversarial attack on sequential decision-makers that is both effective and of $\epsilon$-bounded statistical detectability. 
We propose a novel dual ascent algorithm to learn such attacks end-to-end.
Compared to existing attacks, we empirically find \eattacks{} to be significantly harder to detect with automated methods, and a small study with human participants\footnote{IRB approval under reference R84123/RE001} suggests they are similarly harder to detect for humans. 
Our findings suggest the need for better anomaly detectors, as well as effective hardware- and system-level defenses. The project website can be found at {\small \url{https://tinyurl.com/illusory-attacks}}.
\end{abstract}
\section{Introduction}
\label{sec:intro}

The sophistication of attacks on cyber-physical systems is increasing, driven in no small part by the proliferation of increasingly powerful commercial cyber attack tools~\citep{nscs_threat_2023}. 
AI-driven technologies, such as virtual reality systems~\citep{adams_ethics_2018} and large-language model assistants~\citep{radford2019language} are opening up additional attack surfaces. Further examples are deep learning methods in autonomous driving tasks~\citep{ren_faster_2015,shi_pointrcnn_2019,minaee_image_2022}, deep reinforcement learning methods for robotics~\citep{todorov_mujoco_2012,andrychowicz_learning_2020}, and nuclear fusion~\citep{degrave2022magnetic}.
While AI can be used for cyber defense, the threat from automated AI-driven cyber attacks is thought to be significant~\citep{buchanan_automating_2023} and the future balance between automated attacks and defenses hard to predict~\citep{hoffman_ai_2021}. 

Beyond its beneficial use, deep reinforcement learning has also been proposed as a method for learning flexible automated attacks on AI-driven sequential decision makers~\citep{ilahi2021challenges}. 
A common approach to countering adversarial attacks is to use policy robustification~\citep{kumar_policy_2021,wu_crop_2021}. 
This approach can be effective, as visualized by the red-circled budgets in Fig.~\ref{fig:teaser}. 
However, as we show in this work, for observation-space attacks with larger budgets (grey circles in Fig.~\ref{fig:teaser}), robustification can be ineffective. The practical feasibility of large budget attacks has been highlighted in domains such as visual sensor attacks~\citep[patch attacks]{cao_invisible_2021}, as well as botnet evasion attacks~\citep{merkli_evaluating_2020,de_witt_fixed_2021}.
This highlights the importance of a two-step defense process in which the first step employs anomaly detection~\citep{haider2023out}, followed by attack-mitigating security escalations.
This coincides with common cybersecurity practice, where intrusion detection systems allow for the implementation of mitigating contingency actions as a defense strategy~\citep{cazorla_cyber_2018}. 
Therefore, effective cyber attackers are known to prioritize detection avoidance~\citep[STUXNET 417 attack]{stuxnet}.

In this paper, we study the information-theoretic limits of the detectability of automated attacks on cyber-physical systems. 
To this end, we introduce a novel observation-space \textit{illusory} attack framework, which imposes a novel information-theoretic detectability constraint on adversarial attacks that is grounded in information-theoretic steganalysis~\citep{cachin_information-theoretic_1998}. 
Unlike existing frameworks, the illusory attack framework naturally allows attackers to exploit environment stochasticity in order to generate effective attacks that are hard ($\epsilon$-illusory), or even impossible (perfect illusory) to detect. 

We propose a theoretically-grounded dual ascent algorithm and scalable estimators for learning illusory attacks. On a variety of RL benchmark problems, we show that illusory attacks can exhibit much better performance against victim agents equipped with state-of-the-art detectors than conventional attacks.  
Lastly, in a controlled study with human participants, we demonstrate that illusory attacks can be significantly harder to detect visually than existing attacks, owing to their seeming preservation of physical dynamics. 
Our findings suggest that software-level defenses against automated attacks alone might not be sufficiently effective, and that system-wide and hardware-level robustification may be required for adequate security protection~\citep{wylde_zero_2021}. 
We also suggest that better anomaly detectors for sequential-decision-making agents should be developed.

\begin{figure}
  \centering
  \includegraphics[width=1\linewidth]{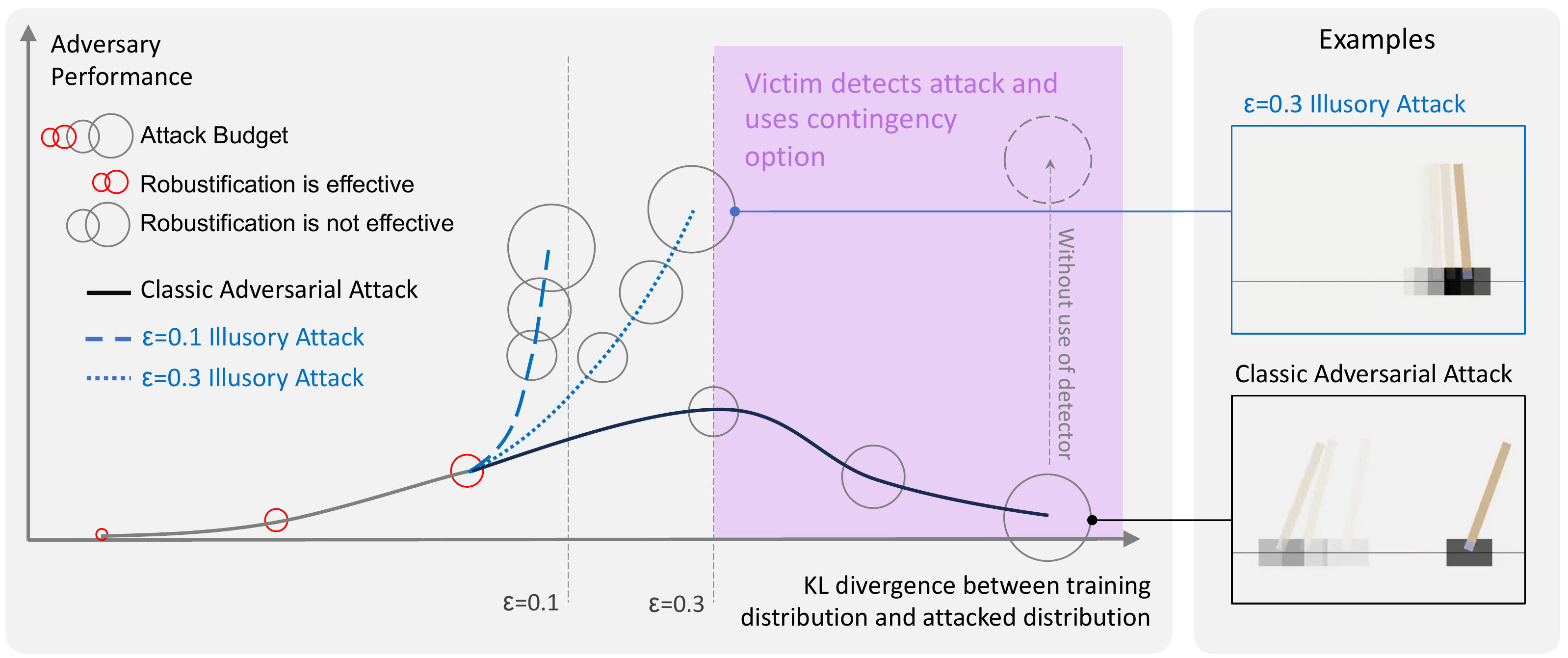}
  \caption{We see adversary performance (reduction in the victim's reward) mapped against the KL divergence between the unattacked training and the attacked test distribution. Attacks with a small L2 attack budget (indicated by small circles) can be defended against using randomized smoothing, and attacks with a large KL divergence can be defended against by triggering contingency options upon detection of the attack (purple shaded area). Illusory attacks (blue) can achieve significantly higher performance than classic adversarial attacks (black), as they allow to limit the KL divergence and thereby avoid detection.}
  \label{fig:teaser}
\end{figure}

Our work makes the following contributions:
\begin{itemize}
    \item We formalize the novel \textit{illusory} attack framework with information-theoretically grounded attack detectability constraints. 
    \item We propose a dual ascent algorithm and scalable estimator to learn illusory attacks in high-dimensional control environments.
    \item We show that illusory attacks can be effective against victims with state-of-the-art out-of-distribution detectors, whereas existing attacks can be detected and hence are ineffective.
    \item We show that illusory attacks are significantly harder to detect by human visual inspection.
\end{itemize}

\section{Related work}
\label{sec:related}
Please see Appendix~\ref{app:related} for additional related work.

The \textbf{adversarial attack} literature originates in image classification \citep{szegedy}, where attacks commonly need to be visually imperceptible. 
Visual imperceptibility is commonly proxied by simple pixel-space minimum-norm perturbation (MNP) constraints~\citep{goodfellow2014explaining,madry_towards_2023}. Several defenses against MNP attacks have been proposed~\citep{das_shield_2018,xu_feature_2018,samangouei_defense-gan_2023,xie_mitigating_2023}.
Various strands of research in cyber security concern \textbf{adversarial patch (AP) attacks} that do not require access to all the sensor pixels, and commonly assume that the attack target can be physically modified~\citep{eykholt_robust_2018,cao_invisible_2021}.
Illusory attacks differ from both MNP and AP attacks in that they are information-theoretically grounded and undetectable even for large budgets. 

MNP attacks have been extended to \textbf{adversarial attacks on sequential decision-making agents}~\citep{chen_adversarial_2019,ilahi2021challenges,qiaoben_understanding_2021}. In the sequential MNP framework, the adversary can modify the victim's observations up to a step- or episode-wise perturbation budget, both in white-box, as well as in black-box settings. \citet{zhang_robust_2020} and \citet{sun2021strongest} use reinforcement learning to learn adversarial policies that require only black-box access to the victim policy.
Work towards \textbf{robust sequential-decision making} uses techniques such as randomized smoothing~\citep{kumar_policy_2021,wu_crop_2021}, test-time hardening by computing confidence bounds~\citep{everett_certifiable_2021}, training with adversarial loss functions~\citep{oikarinen_robust_2021}, and co-training with adversarial agents~\citep{zhang_robust_2021, dennis2020emergent, lanier2022feasible}. We compare against and build upon this work.

Another body of work focuses on \textbf{detection and detectability of learnt adversarial attacks on sequential decision makers}. Perhaps most closely related to our work, \citet{russo2022balancing} study action-space attacks on low-dimensional stochastic control systems and consider information-theoretic detection~\citep{basseville1993detection,lai1998information,tartakovsky2014sequential} based on stochastic equivalence between the resulting trajectories. We instead investigate high-dimensional observation-space attacks, and consider learned detectors, as well as humans.

AI-driven \textbf{attacks on humans} and human-operated infrastructure, such as social networks, are an active area of research~\citep{tsipras_robustness_2018}. \citep{ye_chatbot_2020} consider data privacy and security issues in the age of personal human assistants, and \citet{ariza_automated_2023} investigate automated social engineering attacks on professional social networks using chatbots. Illusory attacks signify that such automated attacks may be learnt such as to be hard to detect, or indeed undetectable.

Within \textbf{information-theoretic hypothesis testing}, Bayesian optimal experimental design \citep{chaloner_bayesian_1995} studies optimisation objectives that share similarities with the illusory attack objective. \citet{foster_variational_2019} introduce several classes of fast EIG estimators by building on ideas from amortized variational inference. \citet{shen_bayesian_2022} use deep reinforcement learning for sequential Bayesian experiment design. 

\section{Background and notation}
\label{sec:bg}

We denote a probability distribution over a set $\mathcal{X}$ as $\mathcal{P}(\mathcal{X})$, and an unnamed probability distribution as $\mathbb{P}(\cdot)$. The empty set is denoted by $\emptyset$, the indicator function by $\mathbbm{1}$, and the Dirac delta function by $\delta(\cdot)$. \textit{Kleene closures} are denoted by $(\cdot)^*$. For ease of exposition, we restrict our theoretical treatment to probability distributions of finite support where not otherwise indicated.  %

\subsection{MDP and POMDP.}\label{sec:bg:MDP}
 A Markov decision process (MDP) \citep{bellman_dynamic_1958} is a tuple $\langle\mathcal{S},\mathcal{A},\strans,r,\gamma\rangle$, where $\mathcal{S}$ is the finite\footnote{For conciseness, we restrict our exposition to finite state, action and observation spaces. Results carry over to continuous state-action-observation spaces under some technical conditions that we omit for brevity~ \citep{szepesvari_algorithms_2010}.} non-empty state space, $\mathcal{A}$ is the finite non-empty action space, $\strans:\mathcal{S}\times\mathcal{A}\mapsto\mathcal{P}(\mathcal{S})$ is the probabilistic state-transition function, and $r:\mathcal{S}\times\mathcal{A}\mapsto\mathcal{P}(\mathbb{R})$
 is a lower-bounded reward function. %
 Starting from a state $s_t\in\mathcal{S}$ at time $t$, an action $a_t\in\mathcal{A}$ taken by the agent 
 policy $\pi:\mathcal{S}\mapsto\mathcal{P}(\mathcal{A})$ effects a transition to state $s_{t+1}\sim\strans(\cdot|a_t)$ and the emission of a reward $r_{t+1}\sim r(\cdot|s_{t+1}, a_t)$. The initial system state at time $t=0$ is drawn as $s_0\sim p(\cdot|\emptyset)$.  For simplicity, we consider episodes of infinite horizon and hence introduce a discount factor $0\leq \gamma<1$.
In a partially observable MDP~\citep[POMDP]{astrom_optimal_1965,kaelbling_planning_1998} $\langle\mathcal{S},\mathcal{A},\Omega,\mathcal{O},\strans,r,\gamma\rangle$, the agent does not directly observe the system state $s_t$ but instead receives an observation $o_t\sim\mathcal{O}(\cdot|s_t)$ where $\mathcal{O}:\mathcal{S}\mapsto\mathcal{P}(\Omega)$ is an observation function and $\Omega$ is a finite non-empty observation space. In line with standard literature~\citep{monahan_survey_1982}, we disambiguate two stochastic processes that are induced by pairing a POMDP with a policy $\pi$: The \textit{core process}, which is the process over state random variables $\left\{S_t\right\}$, and the \textit{observation process} induced by observation random variables $\left\{O_t\right\}$. Please see Appendix \ref{app:pomdp_correspondence} for a more detailed exposition on POMDPs.

\subsection{Observation-space adversarial attacks.}\label{sec:back:obs_space_att}
Observation-space adversarial attacks consider the scenario where an \textit{adversary} manipulates the observation of a \textit{victim} at test-time. Much prior work falls within the SA-MDP framework~\citep{zhang_robust_2020}, in which an adversarial agent with policy $\xi:\mathcal{S}\mapsto\mathcal{P}(\mathcal{S})$ generates adversarial observations $o_t\sim\xi(s_t)$.
The perturbation is bounded by a budget \OLD{$\mathcal{B}:\mathcal{S}\mapsto\mathcal{S}$}\NEW{$\mathcal{B}:\mathcal{S}\mapsto2^\mathcal{S}$}, %
limiting  $\text{supp}\ \xi(\cdot|s)\in\mathcal{B}(s)$. For simplicity, we consider only zero-sum adversarial attacks, where the adversary minimizes the expected return of the victim. In case of \textit{additive} perturbations, $\mathcal{S}:=\mathbb{R}^d,\ d\in\mathbb{N}$ and $\varphi_t \in \mathbb{R}^d$~\citep{kumar_policy_2021}, 
$\xi(s_t):=\delta(o_t)$. Here, $o_t:=s_t+\varphi_t$, subject to a real positive per-step perturbation budget $B$ such that $\|\varphi_t\|_2^2\leq B^2,\ \forall t$.

\subsection{Information-Theoretic Hypothesis Testing}\label{sec:meth:info_testing}
Following~\citep{blahut_principles_1987,cachin_information-theoretic_1998}, we
assume two probability distributions $\mathbb{P}_1$ and $\mathbb{P}_2$ over the space $\mathcal{Q}$ of possible measurements. 
Given a measurement $Q\in\mathcal{Q}$, we let hypothesis $H_0$ be true if $Q$ was generated from $\mathbb{P}_1$, and $H_1$ if $Q$ was generated from $\mathbb{P}_2$. 
A \textit{decision rule} is then a binary partition of $\mathcal{Q}$ that assigns each element $q\in\mathcal{Q}$ to one of the two hypotheses. Let $\alpha$ be the \textit{type I error} of accepting $H_1$ when $H_0$ is true, and $\beta$ be the \textit{Type II error} of accepting $H_0$ when $H_1$ is true. By the Neyman-Pearson theorem~\citep{neyman_ix_1997}, the \textit{optimal} decision rule is given by assigning $q$ to $H_0$ iff the \textit{log-likelihood} $\log \left(\mathbb{P}_1(q)/\mathbb{P}_2(q)\right)\geq T$, where $T\in\mathbb{R}$ is chosen according to the maximum acceptable $\beta$. For a sequence of measurements $q_t$, this decision rule can be extended to testing whether $\sum_t\log \left(\mathbb{P}_1(q_t)/\mathbb{P}_2(q_t)\right)\geq T$~\citep{wald_sequential_1945}. 
It can further be shown~\citep{blahut_principles_1987} that $d(\alpha, \beta)\leq \text{KL}(\mathbb{P}_1|\mathbb{P}_2),$ where $\text{KL}(\mathbb{Q}|\mathbb{P})=\mathbb{E}_{\mathbb{Q}}\left[\log \mathbb{Q}-\log \mathbb{P}\right]$ is the Kullback-Leibler divergence between two probability distributions $\mathbb{Q}$ and $\mathbb{P}$, and $d(\alpha,\beta)\equiv\alpha(\log\alpha-\log(1-\beta))+(1-\alpha)(\log(1-\alpha)-\log\beta)$ is the \textit{binary relative entropy}. Note that if $\text{KL}(\mathbb{P}_1|\mathbb{P}_2)=0$, then $\alpha=\beta=\frac{1}{2}$, and therefore $H_0$ cannot be better distinguished from $H_1$ than by random guessing. Hence $H_0$ and $H_1$ are information-theoretically indistinguishable if $\text{KL}(\mathbb{P}_1|\mathbb{P}_2)=0$.
 
\section{Illusory attacks}
\label{sec:methods}
\label{sec:attack_framework}
\label{sec:meth_howtodetect}

\subsection{The Illusory Attack Framework}

We introduce a novel \textit{illusory} attack framework in which an adversary attacks a victim acting in the environment $\mathcal{E}$ at test time, thus inducing a two-player zero-sum game $\mathcal{G}$~\citep{von_neumann_theory_1944}. Our work assumes that the following facts about $\mathcal{G}$ are \emph{commonly known}~\citep{halpern_knowledge_1990} by both adversary and victim: At test time, the adversary performs observation-space attacks (see Sec.~\ref{sec:back:obs_space_att}) on the victim. 
The victim can sample from the environment shared with an arbitrary adversary at train time, but has no certainty over which specific test-time policy the adversary will choose. 
The adversary can sample from the environment shared with an arbitrary victim at train time, but has no certainty over which specific test-time policy the victim will choose. 
The task of the victim is to act optimally with respect to its expected test-time return, while the task of the adversary is to minimise the victim's expected test-time return. 

We follow \citet{haider2023out} in that we assume that the victim's reward signal is endogenous~\citep{barto_where_2009}, which means it depends on the victim's action-observation history and is not explicitly modeled at test-time, thereby exposing it to manipulation by the adversary. 
Additionally, environments of interest frequently emit sparse or delayed reward signals that aggravate the task of detecting an attacker before catastrophic damage is inevitable~\citep{sutton_reinforcement_2018,haider2023out}.

Assuming the victim's policy $\pi_v:\left(\mathcal{O}\times\mathcal{A}\right)^*\mapsto\mathcal{P}(\mathcal{A})$ conducts adversary detection using information-theoretically optimal sequential hypothesis testing on its action-observation history (see Section~\ref{sec:meth:info_testing}), the state of the adversary's MDP must contain the action-observation history of the victim. 
The adversary's policy $\advpol:\mathcal{S}\times\left(\mathcal{O}\times\mathcal{A}\right)^*\mapsto \mathcal{P}(\mathcal{O})$ therefore conditions on both the state of the unattacked MDP, as well as the victim's action-observation history. This turns the victim's test-time decision process into a POMDP with an infinite state space, making the game $\mathcal{G}$ difficult to solve with game-theoretic means (see Appendix \ref{app:pomdp_correspondence}). 

In the illusory attack framework, the trajectory density induced by the adversary's MDP is given by 
\begin{equation}
{\textstyle \rho_a(\cdot)\equiv p_0(s_0)\advpol(o_0|s_0)\pi_v(a_0|o_0)\prod_{t=1}^Tp(s_t|s_{t-1},a_{t-1})\advpol(o_t|s_t,o_{<t},a_{<t})\pi_v(a_t|o_{<t},a_{<t}).}
\end{equation}
The trajectory density of the victim's observation process (see Sec.~\ref{sec:bg:MDP}) in the attacked environment is given by %
\begin{equation}
{\textstyle \rho_v(\cdot,\advpol)\equiv \sum_{s_0\dots s_T} \rho_a(\cdot,s_0\dots s_T)}
\end{equation}
Note that $\rho_v(\cdot,\mathbbm{1}_{o_t=s_t})$ reduces to the trajectory density of the unattacked environment
\begin{equation}
{\textstyle \rho_v(\cdot) \equiv \rho_v(\cdot,\mathbbm{1}_{o_t=s_t})=p_0(s_0)\pi_v(a_0|s_0)\prod_{t=1}^Tp(s_t|s_{t-1},a_{t-1})\pi_v(a_t|s_{<t},a_{<t}).}
\end{equation}

\subsection{The Illusory Optimisation Objective}

\begin{figure*}[t]
    \begin{minipage}{0.47\textwidth}
        \centering
        \includegraphics[trim={0 0 0 0},clip,width=\textwidth]{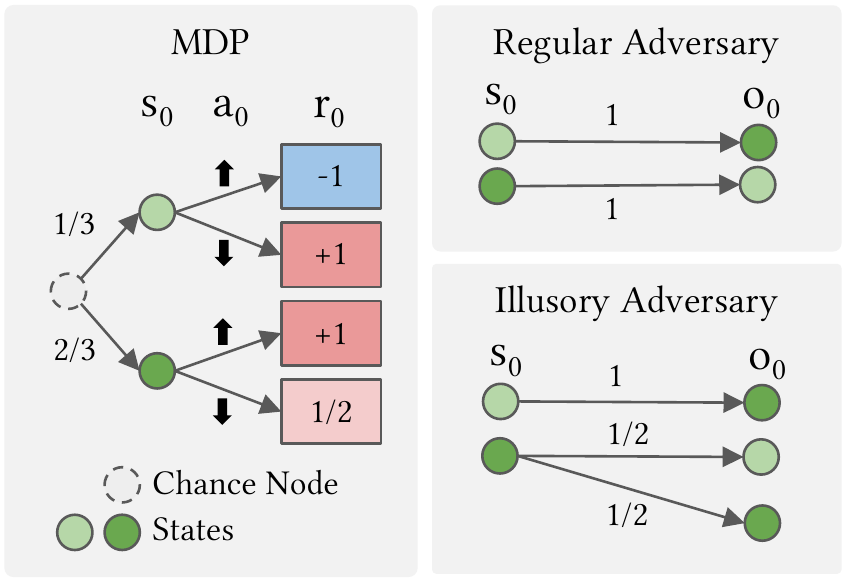}
        \caption{Left: The unattacked MDP with an expected victim return of $1$. Right: A regular adversarial attack and a perfect illusory attack, with an expected vitim return of $0$ and $\frac{1}{6}$, respectively. The perfect illusory attack chooses observations $o_0$ such that the KL divergence between the attacked and unattacked observation distribution is zero.}
        \label{fig:1step_example}
    \end{minipage}
    \hfill
    \begin{minipage}{0.48\textwidth}
        \centering
        \vspace{-10pt}
        \includegraphics[width=\textwidth]{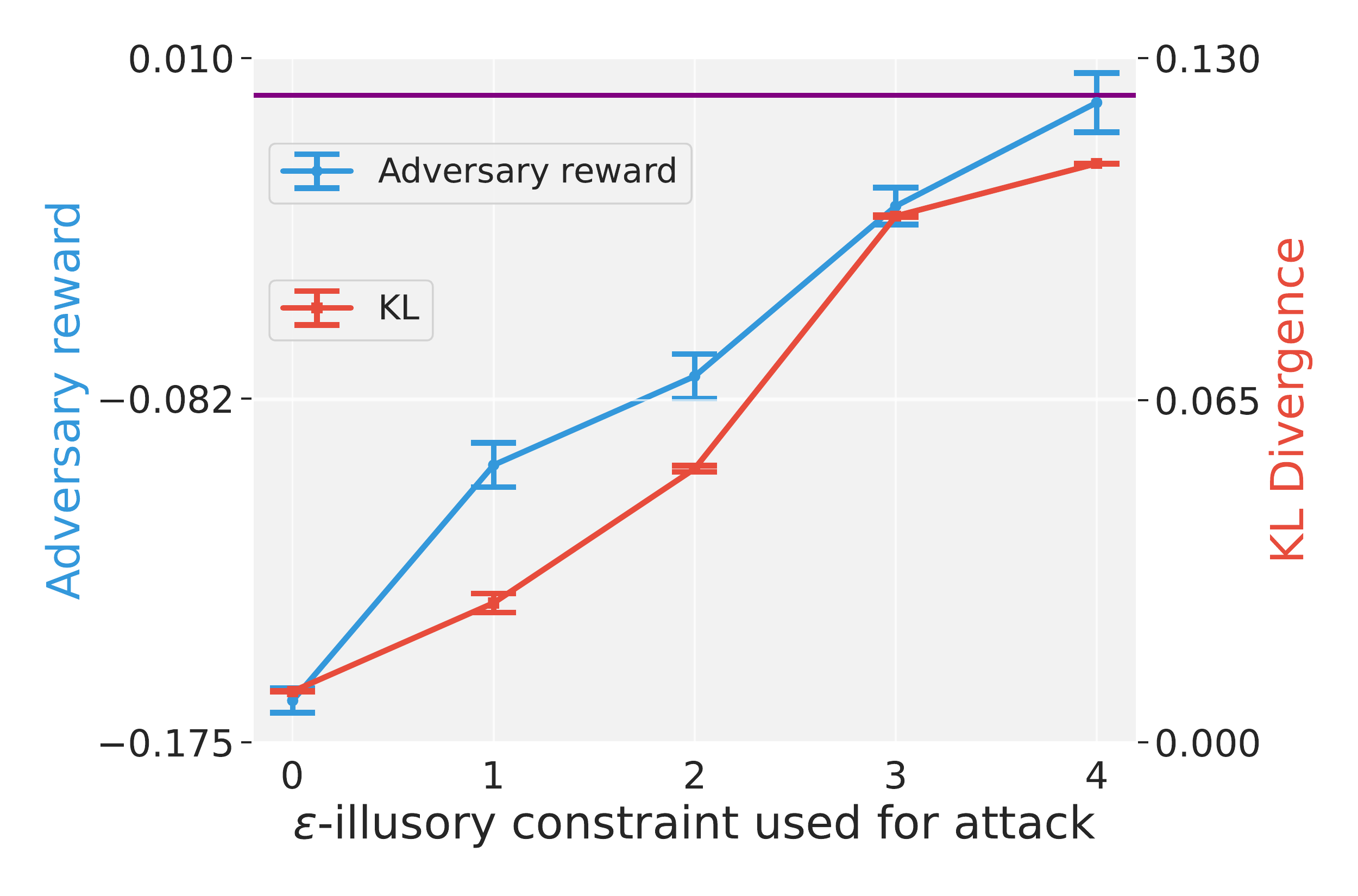}
        \caption{Empirical results for the 1-step MDP defined in Figure \ref{fig:1step_example}. The adversary's expected return increases with increasing $\epsilon$. At the same time, the empirical trajectory KL constraint tightly controls the adversary policy's within $\epsilon$ detectability. The purple line indicates the adversary's attack return ceiling at $0.0$.}
        \label{fig:1step_results}
    \end{minipage}
    \vspace{-10pt}
\end{figure*}

At test-time, the adversary assumes that the victim is employing an information-theoretically optimal decision rule in order to discriminate between the hypotheses that an adversary is present, or not (see Section \ref{sec:bg}). At each test-time step, the victim only has access to an \textit{empirical} distribution $\hat{\rho}_v(\cdot,\advpol)$ based on its test-time samples $N$ collected so far, which constrains the power of its hypothesis test.

We here assume that the adversary does not know how many test-time samples the victim can collect, but has sampling access to the victim's test-time policy $\victimpol$. Therefore, in order to degrade the victim's decision rule performance, the adversary aims to ensure that the KL-distance between $\rho_v(\cdot,\advpol)$ and $\rho_v(\cdot)$ is smaller than a \emph{detectability threshold $\epsilon$}. To maximise attack strength, the adversary would choose the highest $\epsilon$ that warrants undetectability, i.e., 
renders the victim agent unable to distinguish between the observed trajectory distribution of the attacked and unattacked environment. 

We now define information-theoretical optimal adversarial attacks (\eattacks{}) for a given detectability threshold $\epsilon$. We set the direction of the KL-divergence analogously to~\citet{cachin_information-theoretic_1998}.

\begin{definition}[$\epsilon$-illusory attacks]\label{def:eattacks}
An \textit{$\epsilon$-illusory} attack is an adversarial attack $\advpol^*$ which minimizes the victim reward, subject to $\text{KL}\left(\rho_v(\cdot)||\rho_v(\cdot,\advpol)\right)\leq\epsilon$:
\begin{equation}
\begin{aligned}
\advpol^{*}=\arg\inf_{\advpol} \mathbb{E}_{\tau \sim \rho_a} {\textstyle \left[R_t\right]}, \vspace{-1em}\quad\ \text{s.t. } \text{KL}\left(\rho_v(\cdot)||\rho_v(\cdot,\advpol)\right)\leq\epsilon.
\end{aligned}
\end{equation}
\end{definition}

The $\epsilon$-illusory attack objective\footnote{Note that the $\epsilon$-illusory attack objective differs from a standard \textit{constrained MDP}~\citep[CMDP]{altman_constrained_2021} problem in that the illusory constraint cannot be expressed as a discounted sum over state-transition costs~\citep[CPO]{achiam_constrained_2017}, but instead depends on trajectory densities. } therefore aims to train an adversary that reduces the victim's expected cumulative return, while keeping its observed trajectory distribution $\epsilon$-close to the one it would have observed in the unattacked environment.

We refer to illusory attacks that satisfy $\epsilon=0$ as \textit{perfect} illusory attacks. In this case, to the victim, the presence of the adversary induces a POMDP with infinite state-space (see Appendix \ref{app:pomdp_correspondence}), in which the core process over MDP states (see Section \ref{sec:bg:MDP}) differs, but the observation process is statistically indistinguishable from the state-transition dynamics of the unattacked MDP. Importantly, as the illusory KL constraint is distributional, the adversary can learn stochastic adversarial attack policies that are not restricted to the identity function. 

\begin{definition}[Perfect illusory attacks]\label{def:perfattacks}
A perfect illusory attack is any undetectable non-trivial adversarial attack $\advpol$, i.e. any $\advpol$ for which $\advpol \neq \mathbbm{1}_{o_t=s_t}$ and $\text{KL}\left(\rho_v(\cdot)||\rho_v(\cdot,\advpol)\right)=0$.
\end{definition}

\paragraph{Example.} We now build up some intuition over the meaning of illusory attacks by studying a simple single-step stochastic control environment (Figure \ref{fig:1step_example}). The environment is assigned one of two initial states with probabilities $\frac{1}{3}$ and $\frac{2}{3}$, respectively. In the unattacked environment (Figure \ref{fig:1step_example} left), the victim can observe the initial state $s_0$, while under an adversarial attack, it observes $o_0$ (see right side). Given its observation, the victim chooses between two actions, upon which the environment terminates and a scalar reward is issued. The reward conditions on the initial state and the victim's action. Without undetectability constraints, the optimal observation-space attack always generates observations fooling the victim over the initial state (Regular Adversary in Figure~\ref{fig:1step_example}), however, changing the victim's observed initial state distribution. This makes this attack detectable. In contrast, a perfect \textit{illusory} attack only fools the victim half of the time when in the second initial state, and always when in the first initial state, as this does not change the victim's observed initial state distribution. Note that attack undetectability comes at the cost of a higher expected victim return of $\frac{1}{6}$ for the perfect illusory attack, compared to $0$ return under the regular adversarial attack.

\subsection{Dual-Ascent Formulation}
To solve the $\epsilon$-illusory attack objective (see Def.~\ref{def:eattacks}), we propose the following dual-ascent algorithm ~\citep{boyd_convex_2004} with learning rate hyper-parameter $\alpha^\lambda_k\in\mathbb{R}_+$:
\begin{equation}\label{eq:dual_ascent_algo}
\begin{aligned}
\advpol_{k+1}&=\arg\inf_{\advpol} \mathbb{E}_{\tau \sim \rho_a} {\textstyle \left[R_t\right]} - \lambda_{k-1} \left[\text{KL}\left(\rho_v(\cdot)||\rho_v(\cdot,\advpol)\right)-\epsilon\right].\\
\lambda_{k+1}&=\text{max}\left(\lambda_k+\alpha^\lambda_k\left[\text{KL}\left(\rho_v(\cdot)||\rho_v(\cdot,\advpol)\right)-\epsilon\right],0\right)
\end{aligned}
\end{equation}

This algorithm alternates between policy updates and $\lambda$ updates. As the KL-constraint is violated, $\lambda$ adapts, thus modifying the influence on the KL-constraint in the policy update objective. Note that $\lambda_0$ has to be initialized heuristically.

\subsection{Estimating the KL-Objective}
Accurately estimating the KL objective in Def.~\ref{def:eattacks} is, in general, a computationally complex problem due to its nested form and the large support of $\rho_v(\cdot)$ and $\rho_v(\cdot,\advpol)$ (see also Appendix~\ref{app:illusory_objective_estimation}). We write
\begin{equation}\label{eq:kl_obj}
{\textstyle \text{KL}\left(\rho_v(\cdot)||\rho_v(\cdot,\advpol)\right) = \mathbb{E}_{\tau\sim\rho_v(\cdot)}\left[\log\frac{\rho_v(\cdot)}{\rho_v(\cdot,\advpol)}\right], =H\left[\rho_v(\cdot),\rho_v(\cdot,\advpol)\right] - H\left[\rho_v(\cdot)\right]}
\end{equation}
where $H\left[\rho_v(\cdot)\right]$ is the \textit{entropy}, and $H\left[\rho_v(\cdot),\rho_v(\cdot,\advpol)\right]$ is the \textit{cross-entropy}~\citep[p. 953]{murphy_machine_2012}. 

We now explicitly construct an estimator for the cross-entropy term. 
Let $A\equiv\prod_{t=1}^T\pi_v(a_t|o_{<t},a_{<t})$. Then, $\rho_v(\cdot)=A\cdot p_0(o_0)\prod_{t=1}^T p_t\left(o_{t+1}|o_t,a_t\right)$, and %
\begin{equation}
\label{eq:nested}
{\textstyle \rho_v(\cdot|\advpol)=A\cdot\mathop{\mathbb{E}}\limits_{s_0}\left[ \advpol(o_0|s_0)\mathop{\mathbb{E}}\limits_{s_1}\bigg[\advpol(o_1|s_1,o_0,a_0)\mathop{\mathbb{E}}\limits_{s_2}\bigg[\advpol(o_2|s_2,o_{<2},a_{<2})\cdots\bigg]\stackrel{\times (T-2)}{\dots}\right].}
\end{equation}
Constructing an unbiased estimator of $H(\cdot)$ is known to be non-trivial~\citep{shalev_neural_2022}. However, we note that the victim (and adversary) have access to a large number of samples from $\rho_v(\cdot)$, and, in the case of the adversary, $\rho_v(\cdot,\advpol)$. 
In this work, we employ a simple, but highly scalable estimator. Jensen's inequality~\citep{jensen_sur_1906} yields 
\begin{equation}
H\left[\rho_v(\cdot),\rho_v(\cdot,\advpol)\right] = -\mathbb{E}_{\rho_v(\cdot)}\left[\log\mathbb{E}_{s_0\dots s_T}\left[B\right]\right] \leq -\mathbb{E}_{\rho_v(\cdot),s_0\dots s_T}\left[\log B\right],\\
\end{equation}
where $B\equiv\advpol(o_0|s_0)\prod_{t}\advpol(o_t|s_t,o_{<t},a_{<t})$. %
This yields the upper-bound Monte-Carlo estimator 
\begin{equation}\label{eq_mc_estimate}
{ \textstyle \hat{H}\left[\rho_v(\cdot),\rho_v(\cdot,\advpol)\right]=-\frac{1}{N}\sum\limits_{i=1}^N\left[\log \advpol(o^i_0|s^i_0)+\sum\limits_{t=1}^T\log\advpol(o^i_t|s^i_t,o^i_{<t},a^i_{<t})\right],}
\end{equation}
where $(o_t,a_t)^i\stackrel{i.i.d.}{\sim}\rho_v(\cdot)$, and $s_0^i\stackrel{i.i.d.}{\sim} p_0,\ s_{t>0}^i\stackrel{i.i.d.}{\sim} p$.

\section{Empirical evaluation of illusory attacks}
\label{sec:experiments}

We illustrate illusory attacks in a simple stochastic MDP (see Fig.~\ref{fig:1step_example}), where we show that our optimization algorithm allows to precisely control the KL distance between the trajectory distributions of the attacked and unattacked environment.
We then conduct an extensive evaluation of illusory attacks in standard high-dimensional RL benchmark environments~\citep{zhang_detecting_2021, kumar_policy_2021}. 
We first empirically demonstrate the ineffectiveness of state-of-the-art robustification methods for large perturbation budgets $B$ (see Sec.~\ref{sec:back:obs_space_att}). However, we show that state-of-the-art out-of-distribution detectors can readily detect such attacks, rendering them ineffective.  
In contrast, we show that $\epsilon$-illusory attacks with large perturbation budgets can be effective, yet undetectable. This demonstrates that $\epsilon$-illusory attacks can be more performant than existing attacks against victims with state-of-the-art anomaly detectors.
In an IRB-approved study, we demonstrate that humans, efficiently detect state-of-the-art observation-space adversarial attacks on simple control environments, but are considerably less likely to detect \eattacks{} (Section~\ref{sec:human}). 
We provide videos on the project web page at {\small \url{https://tinyurl.com/illusory-attacks}}.
\begin{figure*}[t]
    \includegraphics[trim={7.5cm 0 0 0},clip,width=1.12\textwidth]{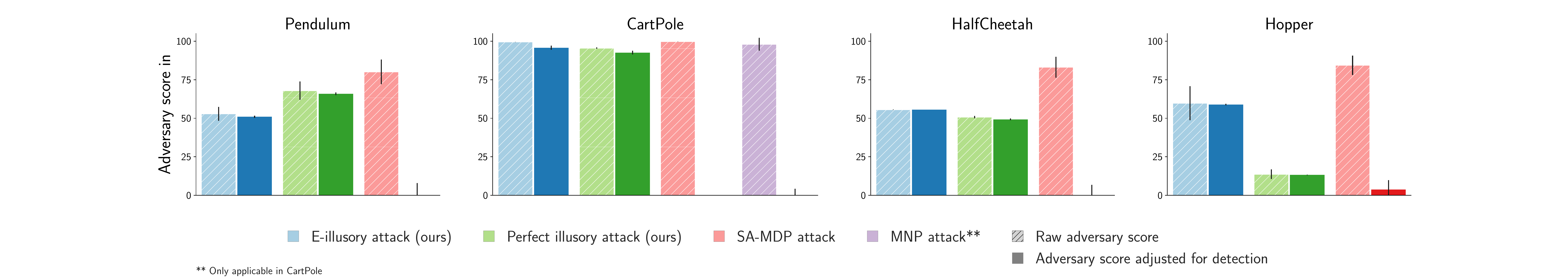}
    \caption{We display normalised adversary scores, indicating the reduction in the victim's reward, on the y-axis. Each plot shows results in different environments, with different adversarial attacks on the x-axis. We show both the raw adversary score, as well as the adversary score adjusted for detection rates of different adversarial attacks (see Figure~\ref{fig:detection_results}). While the SA-MDP and MNP benchmark attacks achieve higher unadjusted scores, their high detection rates result in significantly lower adjusted scores.}
    \label{fig:adv_corrected_results}
    \vspace{-5pt}
\end{figure*}

\paragraph{Experimental setup.}
\label{sec:exp_setup}
We consider the simple stochastic MDP explained in Figure~\ref{fig:1step_example} and the four standard benchmark environments CartPole, Pendulum, Hopper and HalfCheetah (see Figure~\ref{fig:envs} in the Appendix), which have continuous state spaces whose dimensionalities range from $1$ to $17$, as well as continuous and discrete action spaces. 
The mean and standard deviations of both detection and performance results are estimated from $200$ independent episodes per each of $5$ random seeds. 
Victim policies are pre-trained in unattacked environments, and frozen during adversary training. 
We assume the adversary has access to the unattacked environment's state-transition function $\strans$.

\begin{figure}[t]
\centering
\begin{minipage}[t]{0.47\linewidth}
\centering
\includegraphics[width=0.85\linewidth]{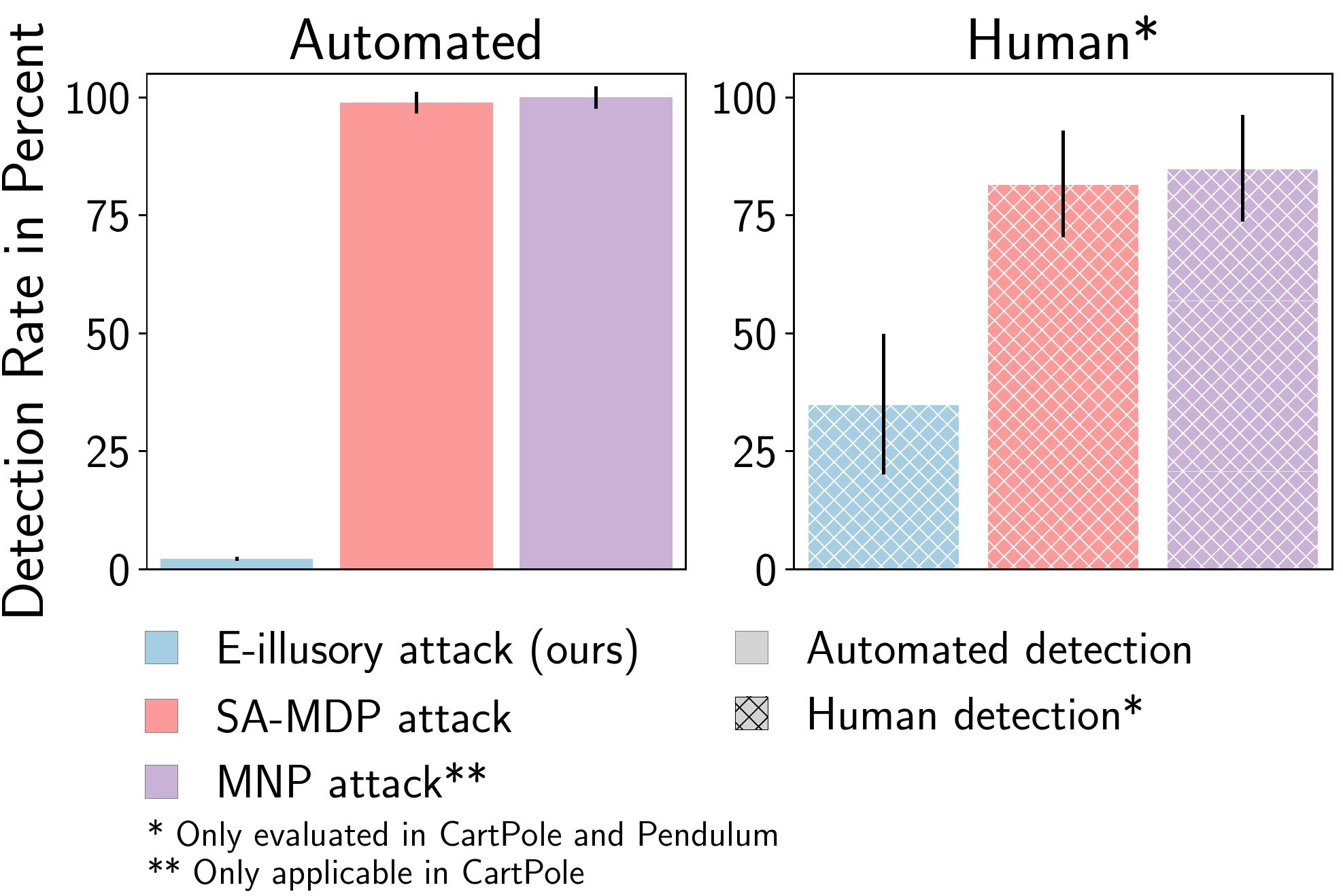}
\caption{Different adversarial attacks are shown on the x-axis, with detection rates on the y-axis. We see that both the automated detector as well as human subjects are able to detect SA-MDP and MNP attacks, while \eattacks{} are less likely to be detected.}
\label{fig:detection_results}
\end{minipage}%
\qquad
\resizebox{0.43\linewidth}{!}{\begin{minipage}[t]{0.5\linewidth}
\vspace{-130pt}
\centering
\begin{algorithm}[H]
\caption{$\epsilon$-illusory training (dual ascent)}
\label{algo:simplified_attack}
\begin{algorithmic}
    \STATE {\bfseries Input:} env, state transition function $\strans$, $\lambda$, $\victimpol$, $N$, $\alpha$, $\epsilon$, estimator $\hat{D}_{KL}$ (see Sec.~\ref{sec:exp_performance}) 
    \STATE Init $\advpol_{\psi}$.
    \FOR{episode in 1 to N}
      \STATE $s =$ env.reset() 
      \STATE $o=\advpol_{\psi} (s)$; $a = \victimpol(o)$
      \STATE $o_{new}$, $r$, {\it done} = env.step($a$)
      \STATE $r^{\text{adv}} = -r - \lambda \left(\lVert o - \strans(\emptyset)\rVert_{2}^2- \epsilon\right)$ 
      \WHILE{not {\it done}}
          \STATE $o = \advpol_{\psi} (s)$; $a= \victimpol (o)$
          \STATE $s_{new}, r, $ {\it done} = env.step($a$)
          \STATE $r^{\text{adv}} = -r - \lambda \left(\lVert o - \strans(o_{old}, a_{old})\rVert_{2}^2 - \epsilon\right)$
      \ENDWHILE
      \STATE Update $\advpol_{\psi}$ using ($s, o, r^{\text{adv}}, s_{new}$).
      \STATE $\lambda = \max(0, \lambda + \alpha (\hat{D}_{KL} - \epsilon))$.
    \ENDFOR
\end{algorithmic}
\end{algorithm}
\end{minipage}}
\end{figure}

\paragraph{Precisely controlling trajectory KL divergence.}
Using an exact implementation of Equation~\ref{eq:dual_ascent_algo}, we learn $\epsilon$-illusory attacks for the single-step MDP environment pictured in Figure~\ref{fig:1step_example}. As can be seen in Figure~\ref{fig:1step_results}, the measured KL($\rho_v(\cdot) || \rho_v(\cdot,\advpol)$ at convergence is bounded tightly by $\epsilon$ until it hits the divergence value for the unconstrained adversarial attack at ca. $\epsilon = 0.11$. 
The adversary's return increases with increasing $\epsilon$ until it reaches the return of the unconstrained attack at $\epsilon = 0.0$.

\paragraph{Effectiveness of state-of-the-art robustification methods under large-budget attacks.}
We first investigate the effectiveness of different robustification methods against a variety of adversarial attacks, considering \textit{randomized smoothing}~\citep{kumar_policy_2021} and \textit{adversarial pretraining} (ATLA, ~\citep{zhang_robust_2021}), for budgets $B \in \{0.02, 0.2\}$. We compare the performance improvement under adversarial attacks of each method relative to the performance without robustification. 
For an attack budget $B=0.05$, we find that randomized smoothing results in an average improvement of $61\%$, while adversarial pretraining results in a $10\%$ performance improvement. However, for the large attack budget $B=0.2$, both only result in average performance improvements of $15\%$ and $8\%$, respectively (see Appendix~\ref{app:robustification} for details).

\subsubsection{Comparative Evaluation of Illusory Attacks}
\label{sec:exp_performance}
\paragraph{Setup.} For all four evaluation environments, we implement {\it perfect illusory attacks} (see Def.~\ref{algo:perfattack}) by first 
constructing an attacked initial state distribution $\strans(\cdot|\emptyset)$ that exploits environment-specific symmetries. 
We then sample the initial attacked observations ${o}_0$ from the attacked initial state distribution and generate subsequent transitions using the unattacked state transition function $\strans(\cdot|o_{t-1}, a_{t-1})$ where $a_{t-1}$ is the action taken at the last time step (see Appendix~\ref{app:results_perfect_illusory_attacks} for details).
In contrast to perfect illusory attacks, {\it \eattacks{}} are learned end-to-end using reinforcement learning. For this, we use a practical variant of the illusory dual ascent objective and estimate the KL-Distance in accordance with the single-sample estimate of the MC-estimate defined in Eq.~\ref{eq_mc_estimate} (see Algorithm~\ref{algo:simplified_attack}, and Appendix \ref{app:w_learning}).
We estimate $\hat{D}_{KL}$ in Algorithm 1, i.e. the penalty term used to update the dual parameter $\lambda$, as the sliding window average of the $D_{KL}$ estimate defined in Equation~\ref{eq:kl_obj}, using a single-sample estimate (see Eq.~\ref{eq_mc_estimate}). We equip the victim agent with the state-of-the-art out-of-distribution detector introduced by \citet{haider2023out}, which is trained on trajectories of the unattacked environment. 
This detector provides anomaly scores which we use to establish a CUSUM~\citep{page_continuous_1954} decision rule tuned to achieve a false positive rate of $3\%$. We adjust the $\epsilon$-illusory treshold to the empirical sensitivity of the detector on each environment.
We consider attack budgets (see Sec.~\ref{sec:back:obs_space_att} $B=0.05$ and $B=0.2$, but focus on $B=0.2$ in this analysis (see Appendix for all results); to ensure a fair comparison, we also apply the attack budget to \eattacks{}.

\paragraph{Adversary performance against victim agents with automated detectors.} 
We investigate the adversaries' relative performance in comparison to state-of-the-art adversarial attacks, specifically SA-MDP attacks~\citep{zhang_robust_2021} and MNP attacks~\citep{kumar_policy_2021}. 
MNP attacks can only be implemented in CartPole which has a discrete action space.
We define the scalar \textit{adversary score} as the resultant reduction in the victim's return, normalized relative to both the highest adversarial return in each class, as well as the victim's expected return in the unattacked environment. 
We simulate contingency actions by setting the adversary's return to zero across episodes classified as attacked. 
This reflects a middle ground across different scenarios in which adversary detection could trigger victim contingency options ranging from no action to test-time termination, major security escalation, and adversary persecution. 
As detailed in Figure~\ref{fig:detection_results}, the detector detects MNP and SA-MDP attacks with a probability close to $100\%$. 
In contrast, the detector classifies \eattacks{} as adversarially attacked with very low probability. 
In coherence, the full columns in Figure~\ref{fig:adv_corrected_results} show that \eattacks{} result in the highest average adversary scores when adjusting for detection. 
In contrast, detection-adjusted adversary scores for state-of-the-art attacks are close to zero, which is expected due to their high empirical detectability (see Fig.~\ref{fig:detection_results}).

\paragraph{Detection of adversarial attacks by human inspection.}\label{sec:human}
We we perform a controlled study with $n=10$ human participants to investigate whether humans unfamiliar with adversarial attacks can detect adversarial attacks in simple and easy-to-understand environments.
We found \textit{CartPole} and \textit{Pendulum}, in contrast to Hopper and HalfCheetah, to be immediately accessible to participants without the need for additional training.  
Participants were first shown an unattacked \emph{introduction video} for both \textit{CartPole} and \textit{Pendulum}, exposing them to environment-specific dynamics. Participants were then shown a random set of videos containing both videos of unattacked and attacked trajectories. 
For each video, participants were asked to indicate whether they believed that the video was \text{unsuspicious}, with the prompt `the system shown in the video was [not] the same as the one from the introduction video'. 
This phrasing was chosen so that participants would not be primed on the concept of illusory attacks (see details in Appendix~\ref{app:human}). 

We found that participants classified MNP and SA-MDP attacks as suspicious with high accuracy (see \emph{Human detection} in Figure~\ref{fig:detection_results}). In contrast, participants were almost equally likely to classify videos of unattacked and \eattacked{} trajectories as unsuspicious. In fact, at a confidence level of $95\%$, the hypothesis `participants are equally likely to classify an unattacked sequence as attacked as to classify an \eattacked{} sequence as attacked' cannot be rejected.
Our findings suggest that humans are unable to detect \eattacks{} from short observation sequences in our simple environments. 
See Appendix~\ref{app:human_study} for full results and the corresponding $z$-test statistic.
\section{Conclusion and Future Work}
\label{sec:discussion}

This paper introduces a novel class of observation-space adversarial attacks, {\it illusory attacks}, which admit an information-theoretically grounded notion of statistical detectability. We show the effectiveness and scalability our approach against both humans, and AI agents with access to state-of-the-art anomaly detectors across a variety of benchmarks.

We expect the potential positive impact of our work to outweigh the potential negative consequences as our work contributes to the design of secure cyber-physical systems. However, it should be acknowledged we assume the availability of contingency options for victim agents, which may not always hold true in real-world scenarios. 
Moreover, our experimental investigations are confined to simulated environments, necessitating further exploration in more intricate real-world domains.

Future research should conduct comprehensive theoretical analysis of the Nash equilibria within the two-player zero-sum game introduced by the illusory attack framework. 
Furthermore, efforts are required to develop more effective defenses against adversarial attacks applicable to real-world environments, including (1) improved detection mechanisms, (2) robustified policies that incorporate detectors, and (3) improved methods to harden observation channels against adversarial attacks. 
An equally significant aspect of detection is gaining a deeper understanding of the human capability to perceive and identify (illusory) adversarial attacks.
We ultimately aim to demonstrate the viability of illusory attacks and the corresponding defense strategies in real-world settings, particularly in mixed-autonomy scenarios.

\paragraph{Reproducibility.}
We are committed to promoting reproducibility and transparency in our research. To facilitate the reproducibility of our results, we release the code on our project page at {\small \url{https://tinyurl.com/illusory-attacks}}. This code reimplements Illusory attacks in JAX~\citep{jax2018github, lu2022discovered}.
We provide detailed overviews for all steps of the experiments conducted in the Appendix, where we also link to the publicly available Code repositories that our work uses.

\newpage

\small
\bibliography{icml2023_conference}

\begin{thebibliography}{106}
\providecommand{\natexlab}[1]{#1}
\providecommand{\url}[1]{\texttt{#1}}
\expandafter\ifx\csname urlstyle\endcsname\relax
  \providecommand{\doi}[1]{doi: #1}\else
  \providecommand{\doi}{doi: \begingroup \urlstyle{rm}\Url}\fi

\bibitem[Abdelfattah et~al.(2021)Abdelfattah, Yuan, Wang, and Ward]{abdelfattah_towards_2021}
Mazen Abdelfattah, Kaiwen Yuan, Z.~Jane Wang, and Rabab Ward.
\newblock Towards {Universal} {Physical} {Attacks} {On} {Cascaded} {Camera}-{Lidar} 3d {Object} {Detection} {Models}.
\newblock In \emph{2021 {IEEE} {International} {Conference} on {Image} {Processing} ({ICIP})}, pp.\  3592--3596, September 2021.
\newblock \doi{10.1109/ICIP42928.2021.9506016}.
\newblock ISSN: 2381-8549.

\bibitem[Achiam et~al.(2017)Achiam, Held, Tamar, and Abbeel]{achiam_constrained_2017}
Joshua Achiam, David Held, Aviv Tamar, and Pieter Abbeel.
\newblock Constrained {Policy} {Optimization}.
\newblock 2017.

\bibitem[Adams et~al.(2018)Adams, Bah, Barwulor, Musaby, Pitkin, and Redmiles]{adams_ethics_2018}
Devon Adams, Alseny Bah, Catherine Barwulor, Nureli Musaby, Kadeem Pitkin, and Elissa~M. Redmiles.
\newblock Ethics {Emerging}: the {Story} of {Privacy} and {Security} {Perceptions} in {Virtual} {Reality}.
\newblock pp.\  427--442, 2018.
\newblock ISBN 978-1-939133-10-6.
\newblock URL \url{https://www.usenix.org/conference/soups2018/presentation/adams}.

\bibitem[Altman(2021)]{altman_constrained_2021}
Eitan Altman.
\newblock \emph{Constrained {Markov} {Decision} {Processes}}.
\newblock Routledge, New York, December 2021.
\newblock ISBN 978-1-315-14022-3.
\newblock \doi{10.1201/9781315140223}.

\bibitem[Andrychowicz et~al.(2020)Andrychowicz, Baker, Chociej, Jozefowicz, McGrew, Pachocki, Petron, Plappert, Powell, Ray, et~al.]{andrychowicz_learning_2020}
OpenAI:~Marcin Andrychowicz, Bowen Baker, Maciek Chociej, Rafal Jozefowicz, Bob McGrew, Jakub Pachocki, Arthur Petron, Matthias Plappert, Glenn Powell, Alex Ray, et~al.
\newblock Learning dexterous in-hand manipulation.
\newblock \emph{The International Journal of Robotics Research}, 2020.

\bibitem[Ariza et~al.(2023)Ariza, Azambuja, Nobre, and Granville]{ariza_automated_2023}
Maurício Ariza, Antonio João Gonçalves~de Azambuja, Jéferson~Campos Nobre, and Lisandro~Zambenedetti Granville.
\newblock Automated {Social} {Engineering} {Attacks} using {ChatBots} on {Professional} {Social} {Networks}.
\newblock In \emph{Anais do {Workshop} de {Gerência} e {Operação} de {Redes} e {Serviços} ({WGRS})}, pp.\  43--56. SBC, May 2023.
\newblock \doi{10.5753/wgrs.2023.747}.
\newblock URL \url{https://sol.sbc.org.br/index.php/wgrs/article/view/24669}.
\newblock ISSN: 2595-2722.

\bibitem[Barto et~al.(2009)Barto, Lewis, and Singh]{barto_where_2009}
A.~Barto, Richard~L. Lewis, and Satinder Singh.
\newblock Where {Do} {Rewards} {Come} {From}.
\newblock 2009.
\newblock URL \url{https://www.semanticscholar.org/paper/Where-Do-Rewards-Come-From-Barto-Lewis/98a08189f5251ab471808b661eceab94fd10d809}.

\bibitem[Basseville et~al.(1993)Basseville, Nikiforov, et~al.]{basseville1993detection}
Michele Basseville, Igor~V Nikiforov, et~al.
\newblock \emph{Detection of abrupt changes: theory and application}, volume 104.
\newblock prentice Hall Englewood Cliffs, 1993.

\bibitem[Bellman(1958)]{bellman_dynamic_1958}
Richard Bellman.
\newblock Dynamic programming and stochastic control processes.
\newblock \emph{Information and Control}, 1\penalty0 (3):\penalty0 228--239, September 1958.
\newblock ISSN 0019-9958.
\newblock \doi{10.1016/S0019-9958(58)80003-0}.

\bibitem[Blahut(1987)]{blahut_principles_1987}
Richard~E. Blahut.
\newblock \emph{Principles and practice of information theory}.
\newblock Addison-Wesley Longman Publishing Co., Inc., 1987.
\newblock ISBN 978-0-201-10709-8.

\bibitem[Blei et~al.(2017)Blei, Kucukelbir, and McAuliffe]{blei_variational_2017}
David~M. Blei, Alp Kucukelbir, and Jon~D. McAuliffe.
\newblock Variational {Inference}: {A} {Review} for {Statisticians}.
\newblock \emph{Journal of the American Statistical Association}, 112\penalty0 (518):\penalty0 859--877, April 2017.
\newblock ISSN 0162-1459.
\newblock \doi{10.1080/01621459.2017.1285773}.
\newblock URL \url{https://doi.org/10.1080/01621459.2017.1285773}.
\newblock Publisher: Taylor \& Francis \_eprint: https://doi.org/10.1080/01621459.2017.1285773.

\bibitem[Boyd \& Vandenberghe(2004)Boyd and Vandenberghe]{boyd_convex_2004}
Stephen Boyd and Lieven Vandenberghe.
\newblock \emph{Convex {Optimization}}.
\newblock Cambridge University Press, March 2004.
\newblock ISBN 978-1-107-39400-1.
\newblock Google-Books-ID: IUZdAAAAQBAJ.

\bibitem[Bradbury et~al.(2018)Bradbury, Frostig, Hawkins, Johnson, Leary, Maclaurin, Necula, Paszke, Vander{P}las, Wanderman-{M}ilne, and Zhang]{jax2018github}
James Bradbury, Roy Frostig, Peter Hawkins, Matthew~James Johnson, Chris Leary, Dougal Maclaurin, George Necula, Adam Paszke, Jake Vander{P}las, Skye Wanderman-{M}ilne, and Qiao Zhang.
\newblock {JAX}: composable transformations of {P}ython+{N}um{P}y programs, 2018.
\newblock URL \url{http://github.com/google/jax}.

\bibitem[Brockman et~al.(2016)Brockman, Cheung, Pettersson, Schneider, Schulman, Tang, and Zaremba]{1606.01540}
Greg Brockman, Vicki Cheung, Ludwig Pettersson, Jonas Schneider, John Schulman, Jie Tang, and Wojciech Zaremba.
\newblock Openai gym, 2016.

\bibitem[Buchanan et~al.(2023)Buchanan, Bansemer, Cary, Lucas, and Musser]{buchanan_automating_2023}
Ben Buchanan, John Bansemer, Dakota Cary, Jack Lucas, and Mussah Musser.
\newblock Automating {Cyber} {Attacks}, 2023.
\newblock URL \url{https://cset.georgetown.edu/publication/automating-cyber-attacks/}.

\bibitem[Cachin(1998)]{cachin_information-theoretic_1998}
Christian Cachin.
\newblock An information-theoretic model for steganography.
\newblock In David Aucsmith (ed.), \emph{Information Hiding}, Lecture Notes in Computer Science, pp.\  306--318. Springer, 1998.
\newblock ISBN 978-3-540-49380-8.
\newblock \doi{10.1007/3-540-49380-8_21}.

\bibitem[Cao et~al.(2019)Cao, Xiao, Cyr, Zhou, Park, Rampazzi, Chen, Fu, and Mao]{cao_adversarial_2019}
Yulong Cao, Chaowei Xiao, Benjamin Cyr, Yimeng Zhou, Won Park, Sara Rampazzi, Qi~Alfred Chen, Kevin Fu, and Z.~Morley Mao.
\newblock Adversarial {Sensor} {Attack} on {LiDAR}-based {Perception} in {Autonomous} {Driving}.
\newblock In \emph{Proceedings of the 2019 {ACM} {SIGSAC} {Conference} on {Computer} and {Communications} {Security}}, {CCS} '19, pp.\  2267--2281, New York, NY, USA, November 2019. Association for Computing Machinery.
\newblock ISBN 978-1-4503-6747-9.
\newblock \doi{10.1145/3319535.3339815}.
\newblock URL \url{https://dl.acm.org/doi/10.1145/3319535.3339815}.

\bibitem[Cao et~al.(2021)Cao, Wang, Xiao, Yang, Fang, Yang, Chen, Liu, and Li]{cao_invisible_2021}
Yulong Cao, Ningfei Wang, Chaowei Xiao, Dawei Yang, Jin Fang, Ruigang Yang, Qi~Alfred Chen, Mingyan Liu, and Bo~Li.
\newblock Invisible for both {Camera} and {LiDAR}: {Security} of {Multi}-{Sensor} {Fusion} based {Perception} in {Autonomous} {Driving} {Under} {Physical}-{World} {Attacks}.
\newblock pp.\  176--194. IEEE Computer Society, May 2021.
\newblock ISBN 978-1-72818-934-5.
\newblock \doi{10.1109/SP40001.2021.00076}.
\newblock URL \url{https://www.computer.org/csdl/proceedings-article/sp/2021/893400b302/1t0x9btzenu}.

\bibitem[Cazorla et~al.(2018)Cazorla, Alcaraz, and Lopez]{cazorla_cyber_2018}
Lorena Cazorla, Cristina Alcaraz, and Javier Lopez.
\newblock Cyber {Stealth} {Attacks} in {Critical} {Information} {Infrastructures}.
\newblock \emph{IEEE Systems Journal}, 12\penalty0 (2):\penalty0 1778--1792, June 2018.
\newblock ISSN 1937-9234.
\newblock \doi{10.1109/JSYST.2015.2487684}.
\newblock Conference Name: IEEE Systems Journal.

\bibitem[Chaloner \& Verdinelli(1995)Chaloner and Verdinelli]{chaloner_bayesian_1995}
Kathryn Chaloner and Isabella Verdinelli.
\newblock Bayesian {Experimental} {Design}: {A} {Review}.
\newblock \emph{Statistical Science}, 10\penalty0 (3):\penalty0 273--304, August 1995.
\newblock ISSN 0883-4237, 2168-8745.
\newblock \doi{10.1214/ss/1177009939}.
\newblock URL \url{https://projecteuclid.org/journals/statistical-science/volume-10/issue-3/Bayesian-Experimental-Design-A-Review/10.1214/ss/1177009939.full}.
\newblock Publisher: Institute of Mathematical Statistics.

\bibitem[Chen et~al.(2019{\natexlab{a}})Chen, Cornelius, Martin, and Chau]{chen_shapeshifter_2019}
Shang-Tse Chen, Cory Cornelius, Jason Martin, and Duen Horng~(Polo) Chau.
\newblock {ShapeShifter}: {Robust} {Physical} {Adversarial} {Attack} on {Faster} {R}-{CNN} {Object} {Detector}.
\newblock In Michele Berlingerio, Francesco Bonchi, Thomas Gärtner, Neil Hurley, and Georgiana Ifrim (eds.), \emph{Machine {Learning} and {Knowledge} {Discovery} in {Databases}}, Lecture {Notes} in {Computer} {Science}, pp.\  52--68, Cham, 2019{\natexlab{a}}. Springer International Publishing.
\newblock ISBN 978-3-030-10925-7.
\newblock \doi{10.1007/978-3-030-10925-7_4}.

\bibitem[Chen et~al.(2019{\natexlab{b}})Chen, Liu, Xiang, Niu, Tong, and Han]{chen_adversarial_2019}
Tong Chen, Jiqiang Liu, Yingxiao Xiang, Wenjia Niu, Endong Tong, and Zhen Han.
\newblock Adversarial attack and defense in reinforcement learning-from {AI} security view.
\newblock \emph{Cybersecurity}, 2019{\natexlab{b}}.

\bibitem[Das et~al.(2018)Das, Shanbhogue, Chen, Hohman, Li, Chen, Kounavis, and Chau]{das_shield_2018}
Nilaksh Das, Madhuri Shanbhogue, Shang-Tse Chen, Fred Hohman, Siwei Li, Li~Chen, Michael~E. Kounavis, and Duen~Horng Chau.
\newblock {SHIELD}: {Fast}, {Practical} {Defense} and {Vaccination} for {Deep} {Learning} using {JPEG} {Compression}.
\newblock In \emph{Proceedings of the 24th {ACM} {SIGKDD} {International} {Conference} on {Knowledge} {Discovery} \& {Data} {Mining}}, {KDD} '18, pp.\  196--204, New York, NY, USA, July 2018. Association for Computing Machinery.
\newblock ISBN 978-1-4503-5552-0.
\newblock \doi{10.1145/3219819.3219910}.
\newblock URL \url{https://dl.acm.org/doi/10.1145/3219819.3219910}.

\bibitem[Degrave et~al.(2022)Degrave, Felici, Buchli, Neunert, Tracey, Carpanese, Ewalds, Hafner, Abdolmaleki, de~Las~Casas, et~al.]{degrave2022magnetic}
Jonas Degrave, Federico Felici, Jonas Buchli, Michael Neunert, Brendan Tracey, Francesco Carpanese, Timo Ewalds, Roland Hafner, Abbas Abdolmaleki, Diego de~Las~Casas, et~al.
\newblock Magnetic control of tokamak plasmas through deep reinforcement learning.
\newblock \emph{Nature}, 602\penalty0 (7897):\penalty0 414--419, 2022.

\bibitem[Dennis et~al.(2020)Dennis, Jaques, Vinitsky, Bayen, Russell, Critch, and Levine]{dennis2020emergent}
Michael Dennis, Natasha Jaques, Eugene Vinitsky, Alexandre Bayen, Stuart Russell, Andrew Critch, and Sergey Levine.
\newblock Emergent complexity and zero-shot transfer via unsupervised environment design.
\newblock \emph{Advances in neural information processing systems}, 33:\penalty0 13049--13061, 2020.

\bibitem[Doucet et~al.(2001)Doucet, de~Freitas, and Gordon]{doucet_introduction_2001}
Arnaud Doucet, Nando de~Freitas, and Neil Gordon.
\newblock An {Introduction} to {Sequential} {Monte} {Carlo} {Methods}.
\newblock In Arnaud Doucet, Nando de~Freitas, and Neil Gordon (eds.), \emph{Sequential {Monte} {Carlo} {Methods} in {Practice}}, Statistics for {Engineering} and {Information} {Science}, pp.\  3--14. Springer, New York, NY, 2001.
\newblock ISBN 978-1-4757-3437-9.
\newblock \doi{10.1007/978-1-4757-3437-9_1}.
\newblock URL \url{https://doi.org/10.1007/978-1-4757-3437-9_1}.

\bibitem[Duan et~al.(2020)Duan, Ma, Wang, Bailey, Qin, and Yang]{duan_adversarial_2020}
Ranjie Duan, Xingjun Ma, Yisen Wang, James Bailey, A.~K. Qin, and Yun Yang.
\newblock Adversarial {Camouflage}: {Hiding} {Physical}-{World} {Attacks} {With} {Natural} {Styles}.
\newblock pp.\  997--1005. IEEE Computer Society, June 2020.
\newblock ISBN 978-1-72817-168-5.
\newblock \doi{10.1109/CVPR42600.2020.00108}.
\newblock URL \url{https://www.computer.org/csdl/proceedings-article/cvpr/2020/716800a997/1m3oiKjRrX2}.

\bibitem[Everett et~al.(2021)Everett, Lutjens, and How]{everett_certifiable_2021}
Michael Everett, Bjorn Lutjens, and Jonathan~P. How.
\newblock Certifiable {Robustness} to {Adversarial} {State} {Uncertainty} in {Deep} {Reinforcement} {Learning}.
\newblock \emph{IEEE Transactions on Neural Networks and Learning Systems}, 2021.

\bibitem[Eykholt et~al.(2018)Eykholt, Evtimov, Fernandes, Li, Rahmati, Xiao, Prakash, Kohno, and Song]{eykholt_robust_2018}
Kevin Eykholt, Ivan Evtimov, Earlence Fernandes, Bo~Li, Amir Rahmati, Chaowei Xiao, Atul Prakash, Tadayoshi Kohno, and Dawn Song.
\newblock Robust {Physical}-{World} {Attacks} on {Deep} {Learning} {Visual} {Classification}.
\newblock In \emph{2018 {IEEE}/{CVF} {Conference} on {Computer} {Vision} and {Pattern} {Recognition}}, pp.\  1625--1634, June 2018.
\newblock \doi{10.1109/CVPR.2018.00175}.
\newblock ISSN: 2575-7075.

\bibitem[Fellows et~al.(2023)Fellows, Kaplowitz, de~Witt, and Whiteson]{fellows_bayesian_2023}
Mattie Fellows, Brandon Kaplowitz, Christian~Schroeder de~Witt, and Shimon Whiteson.
\newblock Bayesian {Exploration} {Networks}, September 2023.
\newblock URL \url{http://arxiv.org/abs/2308.13049}.
\newblock arXiv:2308.13049 [cs].

\bibitem[Foster et~al.(2019)Foster, Jankowiak, Bingham, Horsfall, Teh, Rainforth, and Goodman]{foster_variational_2019}
Adam Foster, Martin Jankowiak, Eli Bingham, Paul Horsfall, Yee~Whye Teh, Tom Rainforth, and Noah Goodman.
\newblock Variational {Bayesian} {Optimal} {Experimental} {Design}.
\newblock 2019.

\bibitem[Goodfellow et~al.(2014)Goodfellow, Shlens, and Szegedy]{goodfellow2014explaining}
Ian~J Goodfellow, Jonathon Shlens, and Christian Szegedy.
\newblock Explaining and harnessing adversarial examples.
\newblock \emph{arXiv preprint arXiv:1412.6572}, 2014.

\bibitem[Gr\"{u}new\"{a}lder et~al.(2012)Gr\"{u}new\"{a}lder, Lever, Baldassarre, Pontil, and Gretton]{Grunewalder12}
Steffen Gr\"{u}new\"{a}lder, Guy Lever, Luca Baldassarre, Massimilano Pontil, and Arthur Gretton.
\newblock Modelling transition dynamics in mdps with rkhs embeddings.
\newblock In \emph{Proceedings of the 29th International Coference on International Conference on Machine Learning}, ICML’12, pp.\  1603–1610, Madison, WI, USA, 2012. Omnipress.
\newblock ISBN 9781450312851.

\bibitem[Haarnoja et~al.(2018)Haarnoja, Zhou, Abbeel, and Levine]{SAC}
Tuomas Haarnoja, Aurick Zhou, Pieter Abbeel, and Sergey Levine.
\newblock Soft actor-critic: Off-policy maximum entropy deep reinforcement learning with a stochastic actor.
\newblock In \emph{International conference on machine learning}. PMLR, 2018.

\bibitem[Haider et~al.(2023)Haider, Roscher, Schmoeller~da Roza, and G{\"u}nnemann]{haider2023out}
Tom Haider, Karsten Roscher, Felippe Schmoeller~da Roza, and Stephan G{\"u}nnemann.
\newblock Out-of-distribution detection for reinforcement learning agents with probabilistic dynamics models.
\newblock In \emph{Proceedings of the 2023 International Conference on Autonomous Agents and Multiagent Systems}, pp.\  851--859, 2023.

\bibitem[Halpern \& Moses(1990)Halpern and Moses]{halpern_knowledge_1990}
Joseph~Y. Halpern and Yoram Moses.
\newblock Knowledge and common knowledge in a distributed environment.
\newblock \emph{Journal of the ACM}, 37\penalty0 (3):\penalty0 549--587, July 1990.
\newblock ISSN 0004-5411.
\newblock \doi{10.1145/79147.79161}.
\newblock URL \url{https://dl.acm.org/doi/10.1145/79147.79161}.

\bibitem[Hoffman(2021)]{hoffman_ai_2021}
Wyatt Hoffman.
\newblock {AI} and the {Future} of {Cyber} {Competition}, 2021.
\newblock URL \url{https://cset.georgetown.edu/publication/ai-and-the-future-of-cyber-competition/}.

\bibitem[Hu et~al.(2021)Hu, Chen, Kung, Hua, and Tan]{hu_naturalistic_2021}
Yu-Chih-Tuan Hu, Jun-Cheng Chen, Bo-Han Kung, Kai-Lung Hua, and Daniel~Stanley Tan.
\newblock Naturalistic {Physical} {Adversarial} {Patch} for {Object} {Detectors}.
\newblock In \emph{2021 {IEEE}/{CVF} {International} {Conference} on {Computer} {Vision} ({ICCV})}, pp.\  7828--7837, October 2021.
\newblock \doi{10.1109/ICCV48922.2021.00775}.
\newblock ISSN: 2380-7504.

\bibitem[Huang et~al.(2020)Huang, Gao, Zhou, Xie, Yuille, Zou, and Liu]{huang_universal_2020}
Lifeng Huang, Chengying Gao, Yuyin Zhou, Cihang Xie, Alan~L. Yuille, Changqing Zou, and Ning Liu.
\newblock Universal {Physical} {Camouflage} {Attacks} on {Object} {Detectors}.
\newblock pp.\  717--726. IEEE Computer Society, June 2020.
\newblock ISBN 978-1-72817-168-5.
\newblock \doi{10.1109/CVPR42600.2020.00080}.
\newblock URL \url{https://www.computer.org/csdl/proceedings-article/cvpr/2020/716800a717/1m3okhiXfzy}.

\bibitem[Huang \& Zhu(2019)Huang and Zhu]{huang2019deceptive}
Yunhan Huang and Quanyan Zhu.
\newblock Deceptive reinforcement learning under adversarial manipulations on cost signals.
\newblock In \emph{International Conference on Decision and Game Theory for Security}, pp.\  217--237. Springer, 2019.

\bibitem[Hussenot et~al.(2019)Hussenot, Geist, and Pietquin]{hussenot2019copycat}
L{\'e}onard Hussenot, Matthieu Geist, and Olivier Pietquin.
\newblock Copycat: Taking control of neural policies with constant attacks.
\newblock \emph{arXiv preprint arXiv:1905.12282}, 2019.

\bibitem[Hutter(2005)]{hutter_universal_2005}
Marcus Hutter.
\newblock \emph{Universal {Artificial} {Intellegence}}.
\newblock Texts in {Theoretical} {Computer} {Science} {An} {EATCS} {Series}. Springer, Berlin, Heidelberg, 2005.
\newblock ISBN 978-3-540-22139-5 978-3-540-26877-2.
\newblock \doi{10.1007/b138233}.
\newblock URL \url{http://link.springer.com/10.1007/b138233}.

\bibitem[Ilahi et~al.(2021)Ilahi, Usama, Qadir, Janjua, Al-Fuqaha, Hoang, and Niyato]{ilahi2021challenges}
Inaam Ilahi, Muhammad Usama, Junaid Qadir, Muhammad~Umar Janjua, Ala Al-Fuqaha, Dinh~Thai Hoang, and Dusit Niyato.
\newblock Challenges and countermeasures for adversarial attacks on deep reinforcement learning.
\newblock \emph{IEEE Transactions on Artificial Intelligence}, 3\penalty0 (2):\penalty0 90--109, 2021.

\bibitem[Jensen(1906)]{jensen_sur_1906}
J.~L. W.~V. Jensen.
\newblock Sur les fonctions convexes et les inégualités entre les valeurs {Moyennes}.
\newblock November 1906.
\newblock \doi{10.1007/bf02418571}.
\newblock URL \url{https://zenodo.org/record/2371297}.

\bibitem[Kaelbling et~al.(1998)Kaelbling, Littman, and Cassandra]{kaelbling_planning_1998}
Leslie~Pack Kaelbling, Michael~L. Littman, and Anthony~R. Cassandra.
\newblock Planning and acting in partially observable stochastic domains.
\newblock \emph{Artificial Intelligence}, 101\penalty0 (1):\penalty0 99--134, May 1998.
\newblock ISSN 0004-3702.
\newblock \doi{10.1016/S0004-3702(98)00023-X}.

\bibitem[Korkmaz(2023)]{korkmaz2023adversarial}
Ezgi Korkmaz.
\newblock Adversarial robust deep reinforcement learning requires redefining robustness.
\newblock \emph{arXiv preprint arXiv:2301.07487}, 2023.

\bibitem[Korkmaz \& Brown-Cohen(2023)Korkmaz and Brown-Cohen]{korkmaz2023detecting}
Ezgi Korkmaz and Jonah Brown-Cohen.
\newblock Detecting adversarial directions in deep reinforcement learning to make robust decisions.
\newblock \emph{arXiv preprint arXiv:2306.05873}, 2023.

\bibitem[Kovařík et~al.(2022)Kovařík, Schmid, Burch, Bowling, and Lisý]{kovarik_rethinking_2022}
Vojtěch Kovařík, Martin Schmid, Neil Burch, Michael Bowling, and Viliam Lisý.
\newblock Rethinking formal models of partially observable multiagent decision making.
\newblock \emph{Artificial Intelligence}, 303:\penalty0 103645, February 2022.
\newblock ISSN 0004-3702.
\newblock \doi{10.1016/j.artint.2021.103645}.
\newblock URL \url{https://www.sciencedirect.com/science/article/pii/S000437022100196X}.

\bibitem[Kumar et~al.(2021)Kumar, Levine, and Feizi]{kumar_policy_2021}
Aounon Kumar, Alexander Levine, and Soheil Feizi.
\newblock Policy {Smoothing} for {Provably} {Robust} {Reinforcement} {Learning}.
\newblock Technical report, arXiv, 2021.

\bibitem[Lai(1998)]{lai1998information}
Tze~Leung Lai.
\newblock Information bounds and quick detection of parameter changes in stochastic systems.
\newblock \emph{IEEE Transactions on Information theory}, 44\penalty0 (7):\penalty0 2917--2929, 1998.

\bibitem[Langner(2011)]{stuxnet}
Ralph Langner.
\newblock Stuxnet: {Dissecting} a {Cyberwarfare} {Weapon}.
\newblock \emph{IEEE Security \& Privacy}, 9\penalty0 (3):\penalty0 49--51, May 2011.
\newblock ISSN 1558-4046.
\newblock \doi{10.1109/MSP.2011.67}.
\newblock Conference Name: IEEE Security \& Privacy.

\bibitem[Lanier et~al.(2022)Lanier, McAleer, Baldi, and Fox]{lanier2022feasible}
John~Banister Lanier, Stephen McAleer, Pierre Baldi, and Roy Fox.
\newblock Feasible adversarial robust reinforcement learning for underspecified environments.
\newblock \emph{arXiv preprint arXiv:2207.09597}, 2022.

\bibitem[Leike(2016)]{leike_nonparametric_2016}
Jan Leike.
\newblock \emph{Nonparametric {General} {Reinforcement} {Learning}}.
\newblock PhD thesis, Australian National University, November 2016.
\newblock URL \url{http://arxiv.org/abs/1611.08944}.
\newblock arXiv:1611.08944 [cs].

\bibitem[Li et~al.(2019)Li, Neupane, Paul, Song, Krishnamurthy, Chowdhury, and Swami]{li_adversarial_2019}
Shasha Li, Ajaya Neupane, Sujoy Paul, Chengyu Song, Srikanth~V. Krishnamurthy, Amit K.~Roy Chowdhury, and Ananthram Swami.
\newblock Adversarial {Perturbations} {Against} {Real}-{Time} {Video} {Classification} {Systems}.
\newblock In \emph{Proceedings 2019 {Network} and {Distributed} {System} {Security} {Symposium}}, 2019.

\bibitem[Lin et~al.(2017)Lin, Liu, Sun, and Huang]{lin_detecting_2017}
Yen-Chen Lin, Ming-Yu Liu, Min Sun, and Jia-Bin Huang.
\newblock Detecting adversarial attacks on neural network policies with visual foresight.
\newblock \emph{arXiv preprint arXiv:1710.00814}, 2017.

\bibitem[Lu et~al.(2022)Lu, Kuba, Letcher, Metz, Schroeder~de Witt, and Foerster]{lu2022discovered}
Chris Lu, Jakub Kuba, Alistair Letcher, Luke Metz, Christian Schroeder~de Witt, and Jakob Foerster.
\newblock Discovered policy optimisation.
\newblock \emph{Advances in Neural Information Processing Systems}, 35:\penalty0 16455--16468, 2022.

\bibitem[Madry et~al.(2023)Madry, Makelov, Schmidt, Tsipras, and Vladu]{madry_towards_2023}
Aleksander Madry, Aleksandar Makelov, Ludwig Schmidt, Dimitris Tsipras, and Adrian Vladu.
\newblock Towards {Deep} {Learning} {Models} {Resistant} to {Adversarial} {Attacks}.
\newblock May 2023.
\newblock URL \url{https://openreview.net/forum?id=rJzIBfZAb}.

\bibitem[McAleer et~al.(2023)McAleer, Farina, Lanctot, and Sandholm]{mcaleer_escher_2023}
Stephen~Marcus McAleer, Gabriele Farina, Marc Lanctot, and Tuomas Sandholm.
\newblock {ESCHER}: {Eschewing} {Importance} {Sampling} in {Games} by {Computing} a {History} {Value} {Function} to {Estimate} {Regret}.
\newblock February 2023.
\newblock URL \url{https://openreview.net/forum?id=35QyoZv8cKO}.

\bibitem[Merkli(2020)]{merkli_evaluating_2020}
Yannick Merkli.
\newblock \emph{Evaluating and {Defeating} {Network} {Flow} {Classifiers} {Through} {Adversarial} {Machine} {Learning}}.
\newblock PhD thesis, ETH Zurich, 2020.

\bibitem[Minaee et~al.(2022)Minaee, Boykov, Porikli, Plaza, Kehtarnavaz, and Terzopoulos]{minaee_image_2022}
Shervin Minaee, Yuri Boykov, Fatih Porikli, Antonio Plaza, Nasser Kehtarnavaz, and Demetri Terzopoulos.
\newblock Image {Segmentation} {Using} {Deep} {Learning}: {A} {Survey}.
\newblock \emph{IEEE Transactions on Pattern Analysis and Machine Intelligence}, 44\penalty0 (7):\penalty0 3523--3542, July 2022.
\newblock ISSN 1939-3539.
\newblock \doi{10.1109/TPAMI.2021.3059968}.
\newblock Conference Name: IEEE Transactions on Pattern Analysis and Machine Intelligence.

\bibitem[Monahan(1982)]{monahan_survey_1982}
George~E. Monahan.
\newblock A {Survey} of {Partially} {Observable} {Markov} {Decision} {Processes}: {Theory}, {Models}, and {Algorithms}.
\newblock \emph{Management Science}, 28\penalty0 (1):\penalty0 1--16, 1982.
\newblock ISSN 0025-1909.
\newblock URL \url{https://www.jstor.org/stable/2631070}.
\newblock Publisher: INFORMS.

\bibitem[Murphy(2012)]{murphy_machine_2012}
Kevin~P. Murphy.
\newblock \emph{Machine {Learning}: {A} {Probabilistic} {Perspective}}.
\newblock The MIT Press, July 2012.
\newblock ISBN 978-0-262-01802-9.

\bibitem[Naesseth et~al.(2015)Naesseth, Lindsten, and Schön]{naesseth_nested_2015}
Christian~A Naesseth, Fredrik Lindsten, and Thomas~B Schön.
\newblock Nested sequential monte carlo methods.
\newblock 2015.

\bibitem[Neyman et~al.(1997)Neyman, Pearson, and Pearson]{neyman_ix_1997}
Jerzy Neyman, Egon~Sharpe Pearson, and Karl Pearson.
\newblock {IX}. on the problem of the most efficient tests of statistical hypotheses.
\newblock 231\penalty0 (694):\penalty0 289--337, 1997.
\newblock \doi{10.1098/rsta.1933.0009}.
\newblock URL \url{https://royalsocietypublishing.org/doi/10.1098/rsta.1933.0009}.
\newblock Publisher: Royal Society.

\bibitem[NSCS(2023)]{nscs_threat_2023}
NSCS.
\newblock The threat from commercial cyber proliferation, 2023.
\newblock URL \url{https://www.ncsc.gov.uk/report/commercial-cyber-proliferation-assessment}.

\bibitem[Oikarinen et~al.(2021)Oikarinen, Zhang, Megretski, Daniel, and Weng]{oikarinen_robust_2021}
Tuomas Oikarinen, Wang Zhang, Alexandre Megretski, Luca Daniel, and Tsui-Wei Weng.
\newblock Robust deep reinforcement learning through adversarial loss.
\newblock \emph{Advances in Neural Information Processing Systems}, 34, 2021.

\bibitem[Page(1954)]{page_continuous_1954}
E.~S. Page.
\newblock Continuous {Inspection} {Schemes}.
\newblock \emph{Biometrika}, 41\penalty0 (1/2):\penalty0 100--115, 1954.
\newblock ISSN 0006-3444.
\newblock \doi{10.2307/2333009}.
\newblock URL \url{https://www.jstor.org/stable/2333009}.
\newblock Publisher: [Oxford University Press, Biometrika Trust].

\bibitem[Qiaoben et~al.(2021)Qiaoben, Ying, Zhou, Su, Zhu, and Zhang]{qiaoben_understanding_2021}
You Qiaoben, Chengyang Ying, Xinning Zhou, Hang Su, Jun Zhu, and Bo~Zhang.
\newblock Understanding {Adversarial} {Attacks} on {Observations} in {Deep} {Reinforcement} {Learning}.
\newblock Technical report, arXiv, 2021.

\bibitem[Radford et~al.(2019)Radford, Wu, Child, Luan, Amodei, and Sutskever]{radford2019language}
Alec Radford, Jeff Wu, Rewon Child, David Luan, Dario Amodei, and Ilya Sutskever.
\newblock Language models are unsupervised multitask learners.
\newblock 2019.

\bibitem[Raffin et~al.(2021)Raffin, Hill, Gleave, Kanervisto, Ernestus, and Dormann]{stable-baselines3}
Antonin Raffin, Ashley Hill, Adam Gleave, Anssi Kanervisto, Maximilian Ernestus, and Noah Dormann.
\newblock Stable-baselines3: Reliable reinforcement learning implementations.
\newblock \emph{Journal of Machine Learning Research}, 22\penalty0 (268):\penalty0 1--8, 2021.

\bibitem[Rainforth et~al.(2018)Rainforth, Cornish, Yang, Warrington, and Wood]{rainforth_nesting_2018}
Tom Rainforth, Robert Cornish, Hongseok Yang, Andrew Warrington, and Frank Wood.
\newblock On nesting monte carlo estimators, 2018.
\newblock URL \url{http://arxiv.org/abs/1709.06181}.

\bibitem[Ren et~al.(2015)Ren, He, Girshick, and Sun]{ren_faster_2015}
Shaoqing Ren, Kaiming He, Ross Girshick, and Jian Sun.
\newblock Faster {R}-{CNN}: {Towards} {Real}-{Time} {Object} {Detection} with {Region} {Proposal} {Networks}.
\newblock In \emph{Advances in {Neural} {Information} {Processing} {Systems}}, volume~28. Curran Associates, Inc., 2015.
\newblock URL \url{https://papers.nips.cc/paper_files/paper/2015/hash/14bfa6bb14875e45bba028a21ed38046-Abstract.html}.

\bibitem[Russo \& Proutiere(2022)Russo and Proutiere]{russo2022balancing}
Alessio Russo and Alexandre Proutiere.
\newblock Balancing detectability and performance of attacks on the control channel of markov decision processes.
\newblock In \emph{2022 American Control Conference (ACC)}, pp.\  2843--2850. IEEE, 2022.

\bibitem[Samangouei et~al.(2023)Samangouei, Kabkab, and Chellappa]{samangouei_defense-gan_2023}
Pouya Samangouei, Maya Kabkab, and Rama Chellappa.
\newblock Defense-{GAN}: {Protecting} {Classifiers} {Against} {Adversarial} {Attacks} {Using} {Generative} {Models}.
\newblock May 2023.
\newblock URL \url{https://openreview.net/forum?id=BkJ3ibb0-}.

\bibitem[Schroeder~de Witt et~al.(2021)Schroeder~de Witt, Huang, Torr, and Strohmeier]{de_witt_fixed_2021}
Christian Schroeder~de Witt, Yongchao Huang, Philip H.~S. Torr, and Martin Strohmeier.
\newblock Fixed {Points} in {Cyber} {Space}: {Rethinking} {Optimal} {Evasion} {Attacks} in the {Age} of {AI}-{NIDS}, November 2021.
\newblock URL \url{http://arxiv.org/abs/2111.12197}.
\newblock arXiv:2111.12197 [cs].

\bibitem[Schulman et~al.(2017)Schulman, Wolski, Dhariwal, Radford, and Klimov]{PPO}
John Schulman, Filip Wolski, Prafulla Dhariwal, Alec Radford, and Oleg Klimov.
\newblock Proximal {Policy} {Optimization} {Algorithms}.
\newblock Technical report, arXiv, August 2017.

\bibitem[Shalev et~al.(2022)Shalev, Painsky, and Ben-Gal]{shalev_neural_2022}
Yuval Shalev, Amichai Painsky, and Irad Ben-Gal.
\newblock Neural {Joint} {Entropy} {Estimation}.
\newblock \emph{IEEE Transactions on Neural Networks and Learning Systems}, pp.\  1--13, 2022.
\newblock ISSN 2162-2388.
\newblock \doi{10.1109/TNNLS.2022.3204919}.
\newblock URL \url{https://ieeexplore.ieee.org/document/9903402}.
\newblock Conference Name: IEEE Transactions on Neural Networks and Learning Systems.

\bibitem[Shen \& Huan(2022)Shen and Huan]{shen_bayesian_2022}
Wanggang Shen and Xun Huan.
\newblock Bayesian {Sequential} {Optimal} {Experimental} {Design} for {Nonlinear} {Models} {Using} {Policy} {Gradient} {Reinforcement} {Learning}, March 2022.
\newblock URL \url{http://arxiv.org/abs/2110.15335}.
\newblock arXiv:2110.15335 [cs, stat].

\bibitem[Shi et~al.(2019)Shi, Wang, and Li]{shi_pointrcnn_2019}
Shaoshuai Shi, Xiaogang Wang, and Hongsheng Li.
\newblock {PointRCNN}: {3D} {Object} {Proposal} {Generation} and {Detection} {From} {Point} {Cloud}.
\newblock In \emph{2019 {IEEE}/{CVF} {Conference} on {Computer} {Vision} and {Pattern} {Recognition} ({CVPR})}, pp.\  770--779, Long Beach, CA, USA, June 2019. IEEE.
\newblock ISBN 978-1-72813-293-8.
\newblock \doi{10.1109/CVPR.2019.00086}.
\newblock URL \url{https://ieeexplore.ieee.org/document/8954080/}.

\bibitem[Sokota et~al.(2023)Sokota, D'Orazio, Kolter, Loizou, Lanctot, Mitliagkas, Brown, and Kroer]{sokota_unified_2023}
Samuel Sokota, Ryan D'Orazio, J.~Zico Kolter, Nicolas Loizou, Marc Lanctot, Ioannis Mitliagkas, Noam Brown, and Christian Kroer.
\newblock A {Unified} {Approach} to {Reinforcement} {Learning}, {Quantal} {Response} {Equilibria}, and {Two}-{Player} {Zero}-{Sum} {Games}, April 2023.
\newblock URL \url{http://arxiv.org/abs/2206.05825}.
\newblock arXiv:2206.05825 [cs].

\bibitem[Sun et~al.(2020{\natexlab{a}})Sun, Cao, Chen, and Mao]{sun_towards_2020}
Jiachen Sun, Yulong Cao, Qi~Alfred Chen, and Z.~Morley Mao.
\newblock Towards robust {LiDAR}-based perception in autonomous driving: general black-box adversarial sensor attack and countermeasures.
\newblock In \emph{Proceedings of the 29th {USENIX} {Conference} on {Security} {Symposium}}, {SEC}'20, pp.\  877--894, USA, August 2020{\natexlab{a}}. USENIX Association.
\newblock ISBN 978-1-939133-17-5.

\bibitem[Sun et~al.(2020{\natexlab{b}})Sun, Zhang, Xie, Ma, Zheng, Chen, and Liu]{sun_stealthy_2020}
Jianwen Sun, Tianwei Zhang, Xiaofei Xie, Lei Ma, Yan Zheng, Kangjie Chen, and Yang Liu.
\newblock Stealthy and {Efficient} {Adversarial} {Attacks} against {Deep} {Reinforcement} {Learning}, May 2020{\natexlab{b}}.
\newblock arXiv:2005.07099 [cs].

\bibitem[Sun et~al.(2021)Sun, Zheng, Liang, and Huang]{sun2021strongest}
Yanchao Sun, Ruijie Zheng, Yongyuan Liang, and Furong Huang.
\newblock Who is the strongest enemy? towards optimal and efficient evasion attacks in deep rl.
\newblock In \emph{International Conference on Learning Representations}, 2021.

\bibitem[Sutton \& Barto(2018)Sutton and Barto]{sutton_reinforcement_2018}
Richard~S. Sutton and Andrew~G. Barto.
\newblock \emph{Reinforcement {Learning}: {An} {Introduction}}.
\newblock A Bradford Book, Cambridge, MA, USA, 2018.
\newblock ISBN 978-0-262-03924-6.

\bibitem[Szegedy et~al.(2013)Szegedy, Zaremba, Sutskever, Bruna, Erhan, Goodfellow, and Fergus]{szegedy}
Christian Szegedy, Wojciech Zaremba, Ilya Sutskever, Joan Bruna, Dumitru Erhan, Ian Goodfellow, and Rob Fergus.
\newblock Intriguing properties of neural networks.
\newblock \emph{arXiv preprint arXiv:1312.6199}, 2013.

\bibitem[Szepesvári(2010)]{szepesvari_algorithms_2010}
Csaba Szepesvári.
\newblock \emph{Algorithms for {Reinforcement} {Learning}}.
\newblock Synthesis {Lectures} on {Artificial} {Intelligence} and {Machine} {Learning}. Springer International Publishing, Cham, 2010.
\newblock ISBN 978-3-031-00423-0 978-3-031-01551-9.
\newblock \doi{10.1007/978-3-031-01551-9}.
\newblock URL \url{https://link.springer.com/10.1007/978-3-031-01551-9}.

\bibitem[Tartakovsky et~al.(2014)Tartakovsky, Nikiforov, and Basseville]{tartakovsky2014sequential}
Alexander Tartakovsky, Igor Nikiforov, and Michele Basseville.
\newblock \emph{Sequential analysis: Hypothesis testing and changepoint detection}.
\newblock CRC Press, 2014.

\bibitem[Tekgul et~al.(2021)Tekgul, Wang, Marchal, and Asokan]{tekgul2021real}
Buse~GA Tekgul, Shelly Wang, Samuel Marchal, and N~Asokan.
\newblock Real-time attacks against deep reinforcement learning policies.
\newblock \emph{arXiv preprint arXiv:2106.08746}, 2021.

\bibitem[Tessler et~al.(2018)Tessler, Mankowitz, and Mannor]{tessler2018reward}
Chen Tessler, Daniel~J Mankowitz, and Shie Mannor.
\newblock Reward constrained policy optimization.
\newblock \emph{arXiv preprint arXiv:1805.11074}, 2018.

\bibitem[Todorov et~al.(2012)Todorov, Erez, and Tassa]{todorov_mujoco_2012}
Emanuel Todorov, Tom Erez, and Yuval Tassa.
\newblock {MuJoCo}: {A} physics engine for model-based control.
\newblock In \emph{2012 {International} {Conference} on {Intelligent} {Robots} and {Systems}}, 2012.

\bibitem[Tsipras et~al.(2018)Tsipras, Santurkar, Engstrom, Turner, and Madry]{tsipras_robustness_2018}
Dimitris Tsipras, Shibani Santurkar, Logan Engstrom, Alexander Turner, and Aleksander Madry.
\newblock Robustness {May} {Be} at {Odds} with {Accuracy}, May 2018.
\newblock URL \url{http://arxiv.org/abs/1805.12152}.
\newblock arXiv:1805.12152 [cs, stat] version: 1.

\bibitem[Tu et~al.(2020)Tu, Ren, Manivasagam, Liang, Yang, Du, Cheng, and Urtasun]{tu_physically_2020}
James Tu, Mengye Ren, Sivabalan Manivasagam, Ming Liang, Bin Yang, Richard Du, Frank Cheng, and Raquel Urtasun.
\newblock Physically {Realizable} {Adversarial} {Examples} for {LiDAR} {Object} {Detection}.
\newblock pp.\  13713--13722. IEEE Computer Society, June 2020.
\newblock ISBN 978-1-72817-168-5.
\newblock \doi{10.1109/CVPR42600.2020.01373}.
\newblock URL \url{https://www.computer.org/csdl/proceedings-article/cvpr/2020/716800n3713/1m3o75on8VG}.

\bibitem[Von~Neumann \& Morgenstern(1944)Von~Neumann and Morgenstern]{von_neumann_theory_1944}
J.~Von~Neumann and O.~Morgenstern.
\newblock \emph{Theory of games and economic behavior}.
\newblock Theory of games and economic behavior. Princeton University Press, Princeton, NJ, US, 1944.
\newblock Pages: xviii, 625.

\bibitem[Wald(1945)]{wald_sequential_1945}
A.~Wald.
\newblock Sequential {Tests} of {Statistical} {Hypotheses}.
\newblock \emph{The Annals of Mathematical Statistics}, 16\penalty0 (2):\penalty0 117--186, 1945.
\newblock ISSN 0003-4851.
\newblock URL \url{https://www.jstor.org/stable/2235829}.
\newblock Publisher: Institute of Mathematical Statistics.

\bibitem[Wasserman(2006)]{Wasserman06}
Larry Wasserman.
\newblock \emph{All of Nonparametric Statistics (Springer Texts in Statistics)}.
\newblock Springer-Verlag, Berlin, Heidelberg, 2006.
\newblock ISBN 0387251456.

\bibitem[Wu et~al.(2021)Wu, Li, Huang, Vorobeychik, Zhao, and Li]{wu_crop_2021}
Fan Wu, Linyi Li, Zijian Huang, Y.~Vorobeychik, Ding Zhao, and Bo~Li.
\newblock {CROP}: {Certifying} {Robust} {Policies} for {Reinforcement} {Learning} through {Functional} {Smoothing}.
\newblock \emph{ArXiv}, 2021.

\bibitem[Wylde(2021)]{wylde_zero_2021}
Allison Wylde.
\newblock Zero trust: {Never} trust, always verify.
\newblock In \emph{2021 {International} {Conference} on {Cyber} {Situational} {Awareness}, {Data} {Analytics} and {Assessment} ({CyberSA})}, pp.\  1--4, June 2021.
\newblock \doi{10.1109/CyberSA52016.2021.9478244}.
\newblock URL \url{https://ieeexplore.ieee.org/abstract/document/9478244}.

\bibitem[Xie et~al.(2023)Xie, Wang, Zhang, Ren, and Yuille]{xie_mitigating_2023}
Cihang Xie, Jianyu Wang, Zhishuai Zhang, Zhou Ren, and Alan Yuille.
\newblock Mitigating {Adversarial} {Effects} {Through} {Randomization}.
\newblock May 2023.
\newblock URL \url{https://openreview.net/forum?id=Sk9yuql0Z}.

\bibitem[Xu et~al.(2018)Xu, Evans, and Qi]{xu_feature_2018}
Weilin Xu, David Evans, and Yanjun Qi.
\newblock Feature {Squeezing}: {Detecting} {Adversarial} {Examples} in {Deep} {Neural} {Networks}.
\newblock In \emph{Proceedings 2018 {Network} and {Distributed} {System} {Security} {Symposium}}, 2018.
\newblock \doi{10.14722/ndss.2018.23198}.
\newblock URL \url{http://arxiv.org/abs/1704.01155}.
\newblock arXiv:1704.01155 [cs].

\bibitem[Ye \& Li(2020)Ye and Li]{ye_chatbot_2020}
Winson Ye and Qun Li.
\newblock Chatbot {Security} and {Privacy} in the {Age} of {Personal} {Assistants}.
\newblock In \emph{2020 {IEEE}/{ACM} {Symposium} on {Edge} {Computing} ({SEC})}, pp.\  388--393, November 2020.
\newblock \doi{10.1109/SEC50012.2020.00057}.
\newblock URL \url{https://ieeexplore.ieee.org/document/9355740}.

\bibitem[Yu \& Bertsekas(2008)Yu and Bertsekas]{yu_near_2008}
Huizhen Yu and Dimitri~P. Bertsekas.
\newblock On near {Optimality} of the {Set} of {Finite}-{State} {Controllers} for {Average} {Cost} {POMDP}.
\newblock \emph{Mathematics of Operations Research}, 33\penalty0 (1):\penalty0 1--11, 2008.
\newblock ISSN 0364-765X.
\newblock URL \url{https://www.jstor.org/stable/25151838}.
\newblock Publisher: INFORMS.

\bibitem[Zhang et~al.(2020)Zhang, Chen, Xiao, Li, Liu, Boning, and Hsieh]{zhang_robust_2020}
Huan Zhang, Hongge Chen, Chaowei Xiao, Bo~Li, Mingyan Liu, Duane Boning, and Cho-Jui Hsieh.
\newblock Robust {Deep} {Reinforcement} {Learning} against {Adversarial} {Perturbations} on {State} {Observations}.
\newblock In \emph{Advances in {Neural} {Information} {Processing} {Systems}}. Curran Associates, Inc., 2020.

\bibitem[Zhang et~al.(2021{\natexlab{a}})Zhang, Chen, Boning, and Hsieh]{zhang_robust_2021}
Huan Zhang, Hongge Chen, Duane Boning, and Cho-Jui Hsieh.
\newblock Robust {Reinforcement} {Learning} on {State} {Observations} with {Learned} {Optimal} {Adversary}, January 2021{\natexlab{a}}.

\bibitem[Zhang et~al.(2021{\natexlab{b}})Zhang, Zhang, Lu, Wang, Wu, Jia, and Liu]{zhang_detecting_2021}
Jindi Zhang, Yifan Zhang, Kejie Lu, Jianping Wang, Kui Wu, Xiaohua Jia, and Bin Liu.
\newblock Detecting and {Identifying} {Optical} {Signal} {Attacks} on {Autonomous} {Driving} {Systems}.
\newblock \emph{IEEE Internet of Things Journal}, 8\penalty0 (2):\penalty0 1140--1153, January 2021{\natexlab{b}}.
\newblock ISSN 2327-4662.
\newblock \doi{10.1109/JIOT.2020.3011690}.
\newblock Conference Name: IEEE Internet of Things Journal.

\bibitem[Zhu et~al.(2021)Zhu, Miao, Zheng, Hajiaghajani, Su, and Qiao]{zhu_can_2021}
Yi~Zhu, Chenglin Miao, Tianhang Zheng, Foad Hajiaghajani, Lu~Su, and Chunming Qiao.
\newblock Can {We} {Use} {Arbitrary} {Objects} to {Attack} {LiDAR} {Perception} in {Autonomous} {Driving}?
\newblock In \emph{Proceedings of the 2021 {ACM} {SIGSAC} {Conference} on {Computer} and {Communications} {Security}}, {CCS} '21, pp.\  1945--1960, New York, NY, USA, November 2021. Association for Computing Machinery.
\newblock ISBN 978-1-4503-8454-4.
\newblock \doi{10.1145/3460120.3485377}.
\newblock URL \url{https://dl.acm.org/doi/10.1145/3460120.3485377}.

\bibitem[Åström(1965)]{astrom_optimal_1965}
Karl~Johan Åström.
\newblock Optimal {Control} of {Markov} {Processes} with {Incomplete} {State} {Information} {I}.
\newblock \emph{Journal of Mathematical Analysis and Applications}, 1965.

\end{thebibliography}
\bibliographystyle{iclr2024_conference}

\appendix
\newpage
\section{Appendix}

\subsection{Additional Related Work}
\label{app:related}

Assuming a different black-box setting, \citet{hussenot2019copycat} 
introduce a class of adversaries for which a unique mask is precomputed and added to the agent observation at every time step. 
Our framework differs from these previous works in that it preserves consistency across trajectories of observation sequences. \citet{korkmaz2023adversarial} proposes adversarial attacks motivated by a notion of imperceptibility measured in policy network activation space. One major difference is that the paper focuses on per-state imperceptibility, while our work focuses on information-theoretic undetectability, which hence requires focusing on whole trajectories. 

AP attack targets include cameras~\citep{eykholt_robust_2018,chen_shapeshifter_2019,duan_adversarial_2020,huang_universal_2020,hu_naturalistic_2021}, LiDAR~\citep{sun_towards_2020,cao_adversarial_2019,zhu_can_2021,tu_physically_2020}, and multi-sensor fusion mechanisms~\citep{cao_invisible_2021,abdelfattah_towards_2021}.

\citet{lin_detecting_2017} develop an action-conditioned frame module that allows agents to detect adversarial attacks by comparing both the module's action distribution with the realised action distribution. \citet{tekgul2021real} detect adversaries by evaluating the feasibility of past action sequences. \citet{li_adversarial_2019, sun_stealthy_2020, huang2019deceptive, korkmaz2023detecting} 
focus on the detectability of adversarial attacks but without considering notions of stochastic equivalence between observation processes.

\subsection{POMDP Correspondence}
\label{app:pomdp_correspondence}
We begin this section by defining standard POMDP notation. In a partially observable MDP~\citep[POMDP]{astrom_optimal_1965,kaelbling_planning_1998} $\langle\mathcal{S},\mathcal{A},\Omega,\mathcal{O},\strans,r,\gamma\rangle$, the agent does not directly observe the system state $s_t$ but instead receives an observation $o_t\sim\mathcal{O}(\cdot|s_t)$ where $\mathcal{O}:\mathcal{S}\mapsto\mathcal{P}(\Omega)$ is an observation function and $\Omega$ is a finite non-empty observation space. The canonical embedding $\textit{pomdp}:\mathfrak{M}\hookrightarrow\mathfrak{P}$ from the set of finite MDPs $\mathfrak{M}$ to the family of POMDPs $\mathfrak{P}$ maps $\Omega\mapsto\mathcal{S}$, and sets $\mathcal{O}(s)=s,\ \forall s\in\mathcal{S}$. 
In a POMDP, the agent acts on a policy $\pi:\mathcal{H}^*_{\Scale[0.5]{\setminus r}}\mapsto\mathcal{P}(\mathcal{A})$, growing a history 
$h_{t+1}=h_ta_to_{t+1}r_{t+1}$ 
from a set of histories $\mathcal{H}^t:=\left(\mathcal{A}\times\mathcal{O}\times \mathbb{R}\right)^t$, where $\mathcal{H}^*:=\bigcup_t\mathcal{H}^t$ denotes the set of all finite histories. We denote histories (or sets of histories) from which reward signals have been removed as $(\cdot)_{\Scale[0.5]{\setminus r}}$.%

In line with standard literature~\citep{monahan_survey_1982}, we distinguish between two stochastic processes that are induced by pairing a POMDP with a policy $\pi$: The \textit{core process}, which is the process over state random variables $\left\{S_t\right\}$, and the \textit{observation process}, which is induced by observation random variables $\left\{O_t\right\}$. 
The frequentist agent's goal is then to find an optimal policy $\pi^*$ that maximizes the total expected discounted return, i.e. $\pi^*=\mathop{\arg\sup}_{\pi\in\Pi}\mathbb{E}_{h_\infty\sim\mathbb{P}^\pi_\infty}\sum_{t=0}^\infty\gamma^tr_t$, where $\Pi:=\{\pi:\mathcal{H}^*_{\Scale[0.5]{\setminus r}}\mapsto \mathcal{P}(\mathcal{A})\}$ is the set of all policies.

Now consider a POMDP $\mathcal{E}_e:=\langle\mathcal{S}',\mathcal{A},\Omega,\mathcal{O}',\strans',r,\gamma\rangle$ with finite horizon $T$, a state space $\mathcal{S}':=(\mathcal{S}\times\mathcal{A}\times\Omega)^T$, deterministic observation function $\mathcal{O}':\mathcal{S}'\mapsto\Omega$, and stochastic state transition function $\strans':\mathcal{S}'\times\mathcal{A}\mapsto\mathcal{P}(\mathcal{S}')$. Then, for any $\victimpol:\mathcal{H}^*_{\Scale[0.5]{\setminus r}}\mapsto\mathcal{P}(\mathcal{A})$ and $\advpol:\mathcal{S}\times\mathcal{H}^*_{\Scale[0.5]{\setminus r}}\mapsto\mathcal{P}(\Omega)$, we can define corresponding $p'$ and $\mathcal{O}'$ such that the reward and observation processes cannot be distinguished by the victim.
We now proceed to the formal Theorem.

\begin{theorem}[POMDP Correspondence]
\label{thm:pomdp_equivalence}
For any $\mathcal{E}^{\Placeholder{}}_{\advpol}$, there exists a corresponding POMDP $\mathcal{E}_e\left(\mathcal{E}^{\Placeholder{}}_{\advpol}\right)$ for which the victim's learning problem is identical. %
\end{theorem}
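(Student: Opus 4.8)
The plan is to construct the POMDP $\mathcal{E}_e\left(\mathcal{E}^{\Placeholder{}}_{\advpol}\right)$ explicitly and then exhibit a bijection between the victim's decision problems in the two settings. The key observation is that in the attacked environment $\mathcal{E}^{\Placeholder{}}_{\advpol}$, the victim interacts with a composite process whose "true" state must record everything the adversary conditions on — namely the underlying MDP state $s_t$ together with the victim's own action-observation history $o_{<t},a_{<t}$. So I would take the augmented state space to be $\mathcal{S}':=(\mathcal{S}\times\mathcal{A}\times\Omega)^{\le T}$ (truncated histories of the core MDP transitions paired with emitted observations), as already anticipated in the paragraph preceding the theorem, and define the transition kernel $\strans'$ on $\mathcal{S}'$ by first drawing the next core state $s_{t+1}\sim p(\cdot\mid s_t,a_t)$ from the unattacked dynamics and then drawing the adversarial observation $o_{t+1}\sim\advpol(\cdot\mid s_{t+1},o_{\le t},a_{\le t})$, appending the triple $(s_{t+1},a_t,o_{t+1})$ to the augmented state. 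The deterministic observation map $\mathcal{O}'$ simply reads off the last observation coordinate $o_{t+1}$ of the augmented state, and the reward is inherited, $r'(\cdot)=r(\cdot)$, evaluated on the core-state coordinates. The initial distribution is $s'_0 = (s_0,o_0)$ with $s_0\sim p_0$ and $o_0\sim\advpol(\cdot\mid s_0)$.

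Next I would verify that this POMDP reproduces exactly the victim's observation-and-reward process. Concretely, I claim that for every victim policy $\victimpol:\mathcal{H}^*_{\Scale[0.5]{\setminus r}}\mapsto\mathcal{P}(\mathcal{A})$, the joint law of $(o_0,a_0,r_1,o_1,a_1,r_2,\dots)$ induced by $(\victimpol,\advpol)$ in $\mathcal{E}^{\Placeholder{}}_{\advpol}$ coincides with the law of the corresponding history variables induced by $\victimpol$ acting in $\mathcal{E}_e$. This is a straightforward induction on the horizon: the base case matches $o_0$'s marginal by construction; the inductive step uses that $\victimpol$ sees only $o_{\le t},a_{<t}$ in both settings (so its action law is identical given matched histories), and that the conditional law of the next observation $o_{t+1}$ given the history is, in $\mathcal{E}_e$, obtained by marginalizing the augmented-state transition — which by the chain rule is precisely $\mathbb{E}_{s_{t+1}\sim p(\cdot\mid s_t,a_t)}[\advpol(o_{t+1}\mid s_{t+1},o_{\le t},a_{\le t})]$ averaged against the posterior over $s_t$, matching the nested expectation defining $\rho_v(\cdot,\advpol)$ in Eq.~\eqref{eq:nested}. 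Since the reward is a fixed (measurable) function of the core states and actions, it is likewise matched. Thus the victim's available information — its reward-free history distribution — and its objective $\mathbb{E}\sum_t\gamma^t r_t$ agree term by term, so the optimization $\pi^*=\arg\sup_{\pi}\mathbb{E}\sum_t\gamma^t r_t$ is the same problem in both.

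Finally I would note the one subtlety to handle carefully: the victim's policy in $\mathcal{E}^{\Placeholder{}}_{\advpol}$ is a priori a map on $(\mathcal{O}\times\mathcal{A})^*$ (observation-action histories without rewards, per the illusory framework), whereas the generic POMDP policy class $\Pi$ in the appendix is defined on $\mathcal{H}^*_{\Scale[0.5]{\setminus r}}=(\mathcal{A}\times\mathcal{O}\times\mathbb{R})^*_{\Scale[0.5]{\setminus r}}$; one must check these policy classes are identified under the correspondence (they are, since stripping rewards from $\mathcal{H}^*$ leaves exactly observation-action histories), so that "the victim's learning problem is identical" is a statement about literally the same constrained optimization over the same policy class with the same objective and the same induced trajectory law. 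I expect this bookkeeping — pinning down the exact information set and confirming the policy-class identification — to be the main obstacle, since the rest is a mechanical push-forward-of-measures argument; the construction of $\strans'$ and $\mathcal{O}'$ itself is forced once one decides the augmented state must carry $(s_t, o_{<t}, a_{<t})$.
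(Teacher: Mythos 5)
Your construction is essentially identical to the paper's: the same augmented state space $(\mathcal{S}\times\mathcal{A}\times\Omega)^{T}$ recording core states, actions, and emitted observations, the same composite transition kernel (core dynamics followed by the adversary's observation draw), and the same projection onto the last observation coordinate. The paper asserts the resulting process identity with a one-line ``clearly,'' whereas you spell out the induction on the horizon and the policy-class bookkeeping, so your write-up is if anything more complete than the published proof.
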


\begin{proof}

Recall that the semantics of $\mathcal{E}^{\pi}_\advpol$ are as follows:
Fix a victim policy $\pi:\mathcal{H}^*_{\setminus r}\mapsto\mathcal{P}$ from the space of all possible sampling policies $\Pi$. 
At time $t=0$, we sample an initial state $s_0\sim\strans(\cdot|\emptyset)$. The adversary then samples an observation $o_0\sim\advpol(\cdot|s_0)$ which is emitted to the victim. The victim takes an action $a_0\sim\pi(\cdot|o_0)$, upon which the state transitions to $s_1\sim\strans(\cdot|s_0,a_0)$ and the victim receives a reward $r_1\sim(\cdot|s_0,a_0)$. At time $t>0$, the victim has accumulated a history $h_{t}:=o_0a_0r_1\dots o_t$, on which $o_t\sim\nu(\cdot|s_t,h_{t\setminus r})$ conditions. 

Define $\strans'$ as the following sequential stochastic process: At time $t=0$, first sample $s_0\sim \strans(\cdot|\emptyset)$. Then sample $o_0\sim\nu(\cdot|s_0)$, and define $s'_0:=\strans'(\emptyset):=(s_0,o_0)$. For any $t>0$, first sample $s_t\sim \strans(\cdot|s_{t-1},a_{t-1})$, then $o_t\sim \nu(\cdot|s_{\leq t},a_{<t},o_{<t})$ and define $s'_{t} := \strans'(s'_{t-1},s_t,o_t,a_{t-1})$. %
We finally define $\mathcal{O}(s'_t):=\mathop{proj}_o(s'_t):=o_t$, where we indicate that $o_t$ is stored in $s'_t$ by using an explicit projection operator $\mathop{proj}_o$. Clearly, under any sampling policy $\pi$, the observation and reward processes induced by $\mathcal{E}_e$ and $\mathcal{E}^{\victimpol}_\advpol$ are identical as $T\rightarrow\infty$. This renders the reward and observation processes identical in both environments. Note that, as $T\rightarrow\infty$, $\mathcal{E}_e$'s state space grows infinitely large. 
\end{proof}

In other words, Theorem \ref{thm:pomdp_equivalence} implies that, given enough memory~\citep{yu_near_2008} 
, the adversary can be chosen such that the state-space of $\mathcal{E}_e(\mathcal{E}^{\Placeholder{}}_\advpol)$ becomes arbitrarily due to its infinite horizon. This renders the worst-case problem of finding an optimal victim policy in $\mathcal{E}_e(\mathcal{E}^{\Placeholder{}}_\advpol)$ intractable even if the adversary's policy is known~\citep{hutter_universal_2005,leike_nonparametric_2016}. The underlying game $\mathcal{G}$, therefore, assumes an infinite state space, preventing recent progress in solving finite-horizon extensive-form games~\citep{kovarik_rethinking_2022,mcaleer_escher_2023,sokota_unified_2023} from being leveraged in characterizing its Nash equilibria. We now a give a proof of construction.

\label{app:pomdp_proof}

\subsection{On the difficulty of estimating the illusory objective}
\label{app:illusory_objective_estimation}

We note that estimating the illusory objective is, in general, difficult. Even when choosing a nonparametric kernel with optimal bandwidth, the risk of conditional density estimators increases as $\mathcal{O}(N^{-\frac{4}{4+d}})$ with support dimensionality $d$ \citep{Wasserman06,Grunewalder12,fellows_bayesian_2023}. This is aggravated by KL-estimation being a nested estimation problem \citep{rainforth_nesting_2018}.

While the estimator bias may be further reduced by using a more sophisticated nested estimation method such as a \textit{multi-level} MC estimator~\citep{naesseth_nested_2015}, and by performing improved estimates for $\rho_\advpol(\cdot,\advpol)$ using variational inference \citep[VI]{blei_variational_2017}, or \textit{sequential} Monte-Carlo~\citep[SMC]{doucet_introduction_2001}, these methods come with increased computational complexity.

\subsection{Detector and decision rule used in experiments}\label{app:worldmodel_detection}

We implement the out-of-distribution detector proposed by ~\citet{haider2023out} using the implementation provided by the authors\footnote{\url{https://github.com/FraunhoferIKS/pedm-ood}}. As this detector provides anomaly scores at every time step but does not provide a decision rule for classifying a distribution as attacked, we implement a CUSUM~\citep{page_continuous_1954} decision rule based on the observed anomaly scores observed at test time and the mean anomaly score for a held-out test set of unattacked episodes.
We train the detector on unperturbed environment interactions, using the configuration provided by the authors.
We then tune the CUSUM decision rule such that a per-episode false positive rate of 3\% is achieved.
We assess the accuracy of detecting adversarial attacks across all scenarios presented in Table~\ref{tab:results_all_app}.

\begin{figure*}
  \centering
  \begin{minipage}[b]{0.23\textwidth}
    \includegraphics[height=2.8cm]{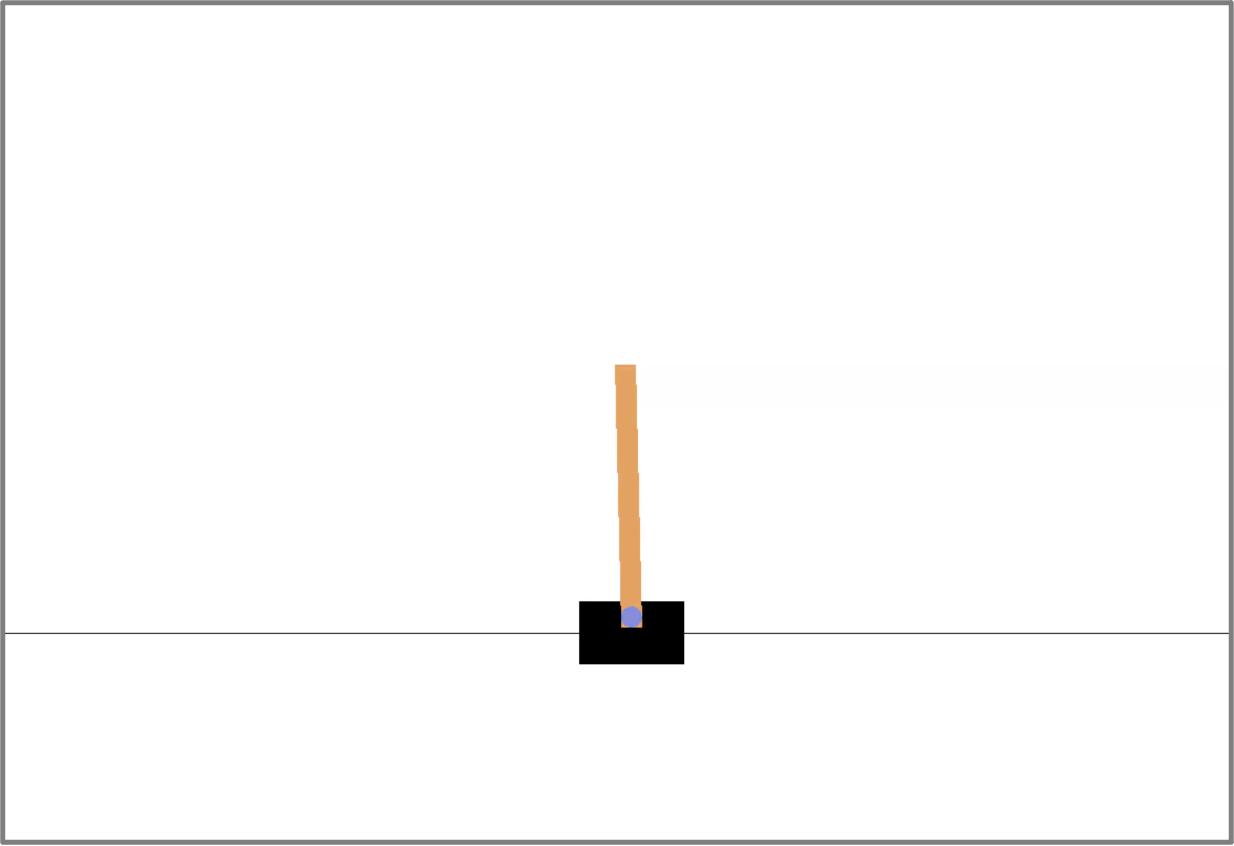}
    \end{minipage}
  \quad
  \quad
  \quad
  \begin{minipage}[b]{0.16\textwidth}
    \includegraphics[height=2.8cm]{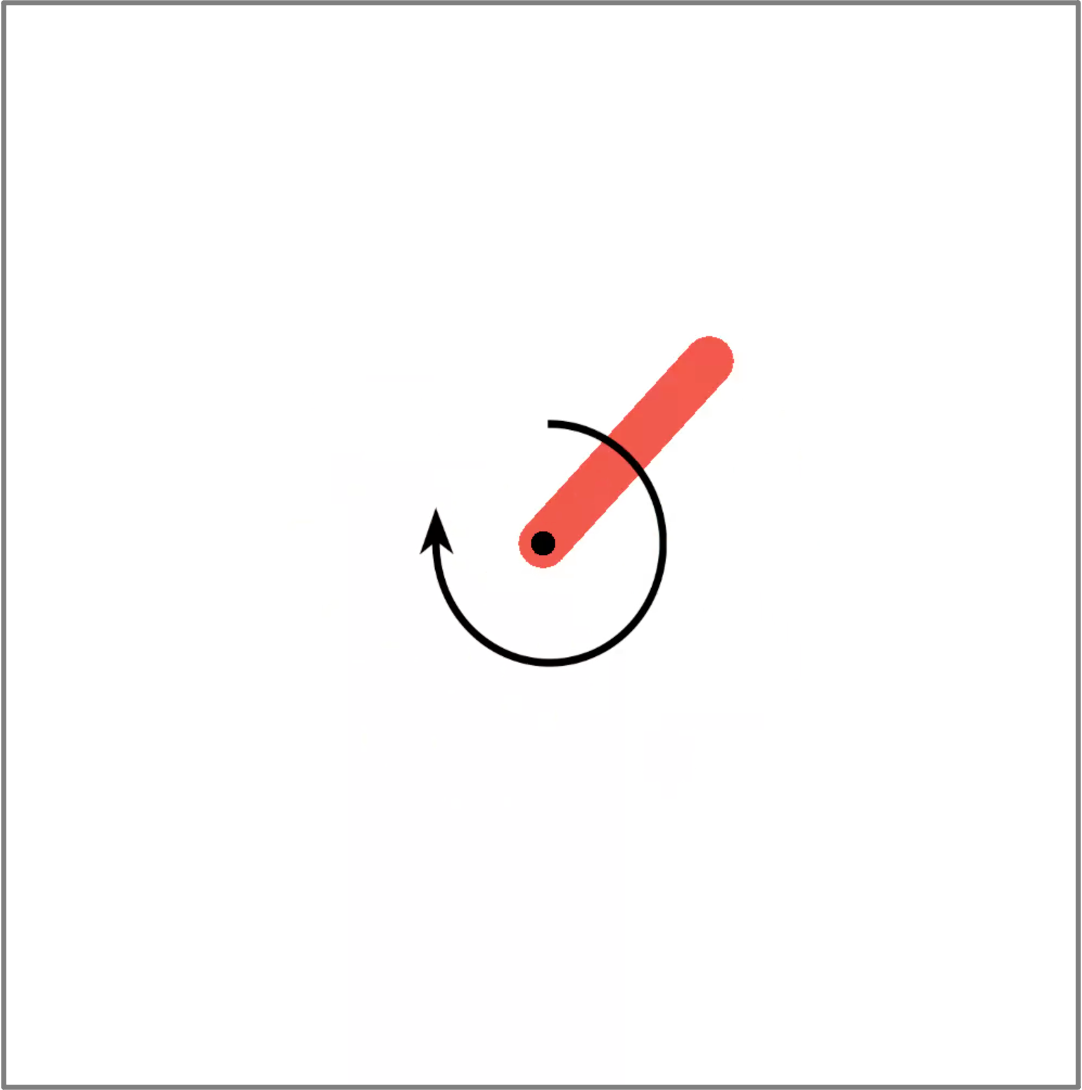}
    \end{minipage}
  \quad
  \quad
  \begin{minipage}[b]{0.16\textwidth}
    \includegraphics[height=2.8cm]{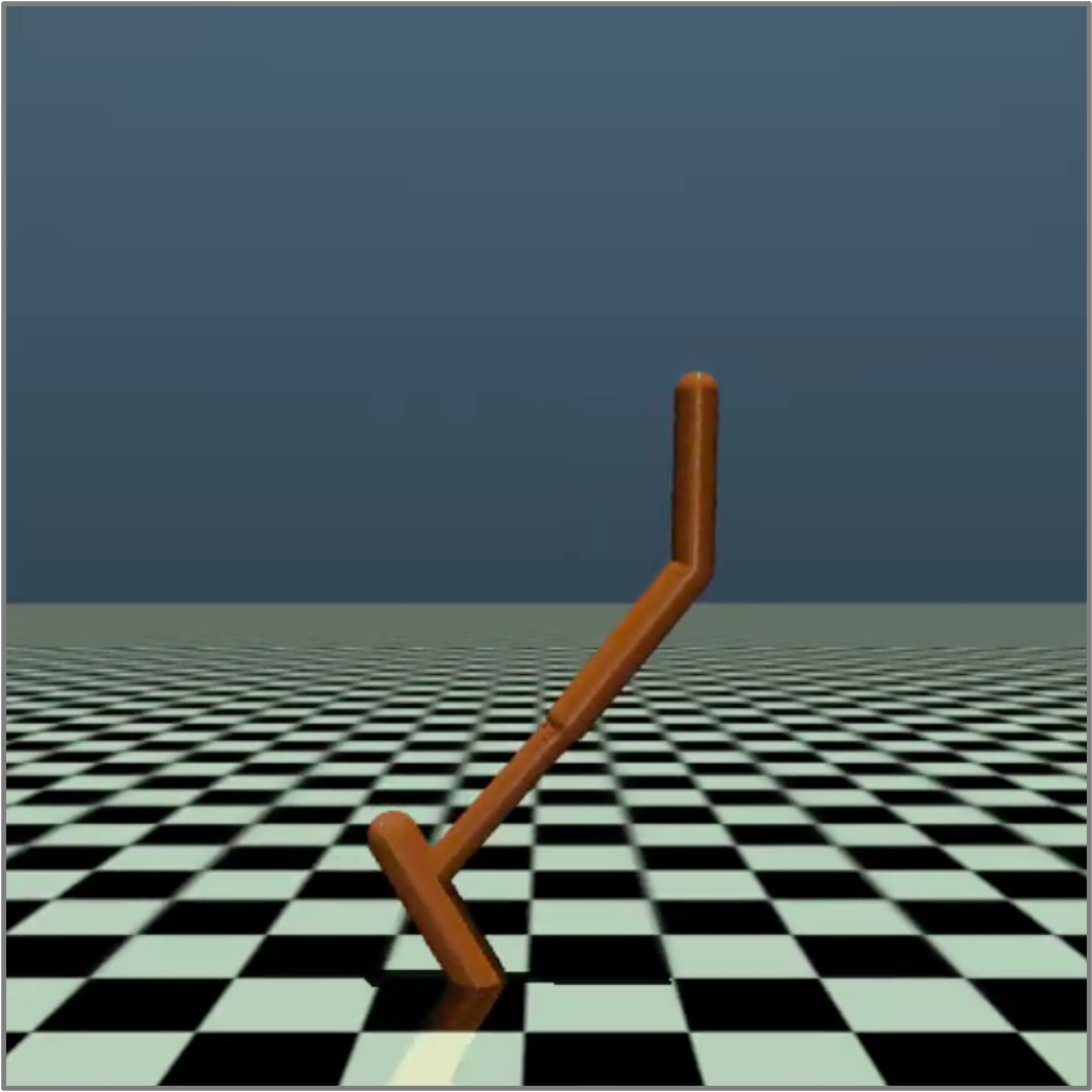}
    \end{minipage}
  \quad
  \quad
  \begin{minipage}[b]{0.16\textwidth}   
    \includegraphics[height=2.8cm]{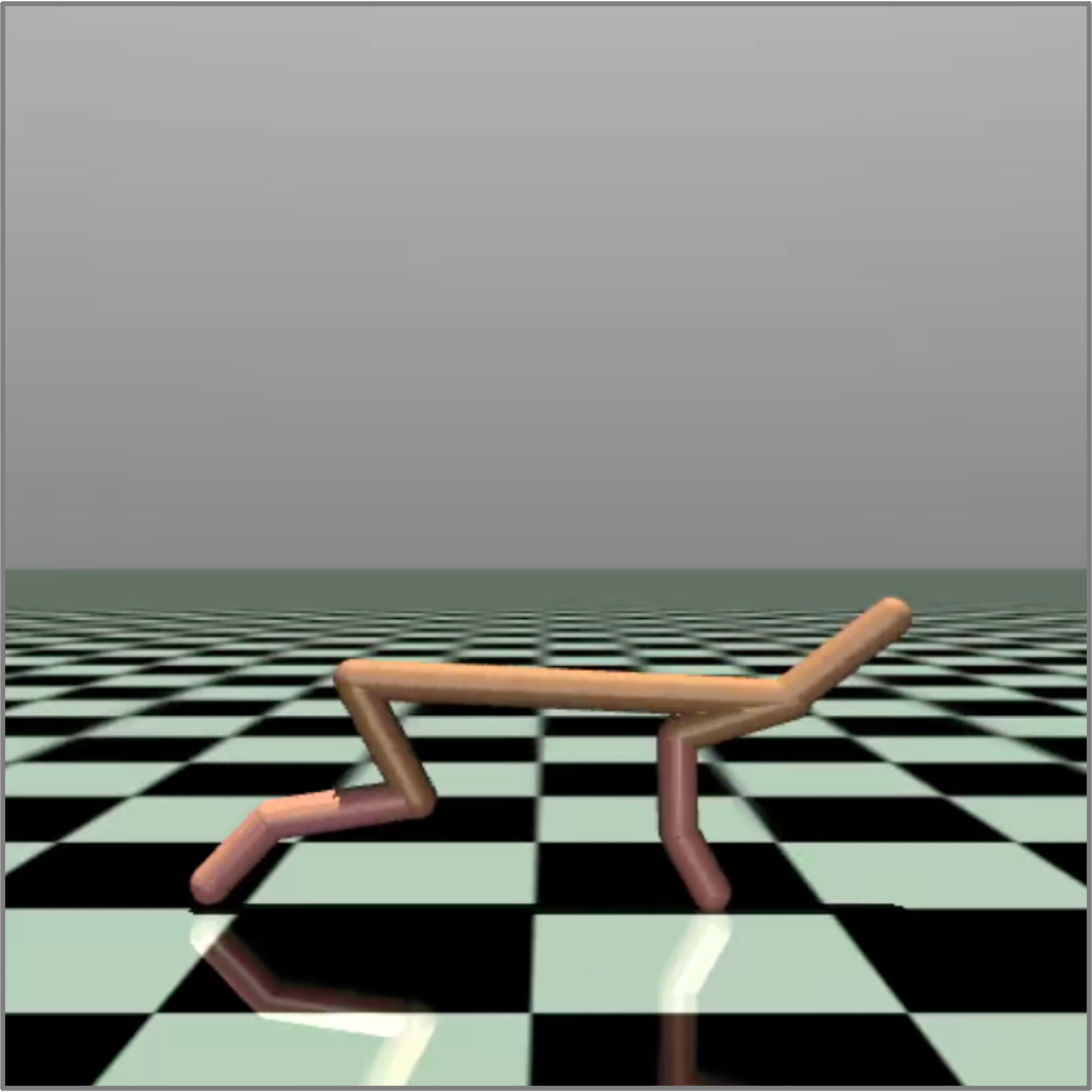}
    \end{minipage}
  \quad
  \quad
  \quad
  \caption{Benchmark environments used for empirical evaluation, from left to right. In \textit{CartPole}, the agent has to balance a pole by moving the black cart. In \textit{Pendulum}, the agent has to apply a torque action to balance the pendulum upright. In \textit{Hopper} and \textit{HalfCheetah}, the agent has to choose high-dimensional control inputs such that the agent moves towards the right of the image.}
  \label{fig:envs}
\end{figure*}

\subsection{Robustification}\label{app:robustification}
We implement the ATLA~\citep{zhang_robust_2021} victim by co-training it with an adversary agent, and follow the original implementation of the authors~\footnote{\url{https://github.com/huanzhang12/ATLA_robust_RL}}. 
We implemented randomized smoothing as a standard defense against adversarial attacks on RL agents, as introduced in~\citet{kumar_policy_2021}. 
We use the author's original implementation\footnote{\url{https://openreview.net/forum?id=mwdfai8NBrJ}}. See Table~\ref{tab:adversary_scores} for results.

\subsection{Perfect illusory attacks implementation}\label{app:results_perfect_illusory_attacks}
We implement perfect illusory attacks as detailed in Algorithm~\ref{algo:perfattack}. The first observation $o_0$ is set to the negative of the true first state sampled from the environment. 
Note that in \textit{HalfCheetah} and \textit{Hopper} the initial state distribution is not centered around the origin, we hence first subtract the offset, and then compute the negative of the observation and add the offset again. 
As the distribution over initial states is symmetric in all environments (after removing the offset), this approach satisfies the conditions of a perfect illusory attack (see Definition~\ref{def:perfattacks}). 

\begin{algorithm}[H]
\caption{Perfect illusory adversarial attack}
\label{algo:perfattack}
\begin{algorithmic}
    \STATE {\bfseries Input:} environment $env$, environment transition function $t$ whose initial state distribution $\strans(\cdot|\emptyset)$ is symmetric with respect to the point $p_{symmetry}$ in $\mathcal{S}$,  victim policy $\pi_v$.
      \STATE $k = 0$
      \STATE $s_0=$ $env$.reset()
      \STATE $o_0 = -(s_0 - p_{symmetry})+p_{symmetry}$ 
      \STATE $a_0=\pi_v$($o_0$)
      \STATE $\_,done=env.$step($a_0$)  
      \WHILE{not\ {\it done}}
          \STATE $k = k+1$
          \STATE $o_k \sim t$($o_{k-1}$, $a_{k-1}$)
          \STATE $a_k=\pi_v(o_k)$
          \STATE $\_,done=env.$step($a_k$)
    \ENDWHILE{}
\end{algorithmic}
\end{algorithm}

\subsection{Learning \eattacks{} with reinforcement learning}\label{app:w_learning}
We next describe the algorithm used to learn \eattacks{} and the training procedures used to compute the results in Table~\ref{tab:results_all_app}. 
We use the \textit{CartPole}, \textit{Pendulum}, \textit{HalfCheetah} and \textit{Hopper} environments as given in~\citet{1606.01540}. 
We shortened the episodes in \textit{Hopper} and \textit{HalfCheetah} to 300 steps to speed up training.
The transition function is implemented using the physics engines given in all environments. 
We normalize observations by the maximum absolute observation. 
We train the victim with PPO~\citep{PPO} and use the implementation of PPO given in~\citet{stable-baselines3}, while not making any changes to the given hyperparameters. 
In both environments we train the victim for 1 million environment steps. 

We implement the illusory adversary agent with SAC~\citep{SAC}, where we likewise use the implementation given in~\citet{stable-baselines3}.
We initially ran a small study and investigated four different algorithms as possible implementations for the adversary agent, where we found that SAC yields best performance and training stability. 
We outline the dual ascent update steps in Algorithm~\ref{algo:simplified_attack}, which, like RCPO~\citep{tessler2018reward}, pulls a single-sample approximation of the constraint into the reward objective.
We approximate $\hat{D}_{KL}$ by taking the mean of the constraint violation $\lVert o - \strans(o_{old}, a_{old})\rVert_{2}^2$ over the last 50 time steps. We further ran a small study over hyperparameters $\alpha \in \{0.01,,0.1,1 \}$ and the initial value for $\lambda \in \{ 10,100 \} $ and chose the best performing combination.
We train all adversarial attacks for four million environment steps.

\begin{figure}[htbp]
    \centering
    \begin{minipage}{0.48\textwidth}
        \centering
        \includegraphics[width=0.8\linewidth]{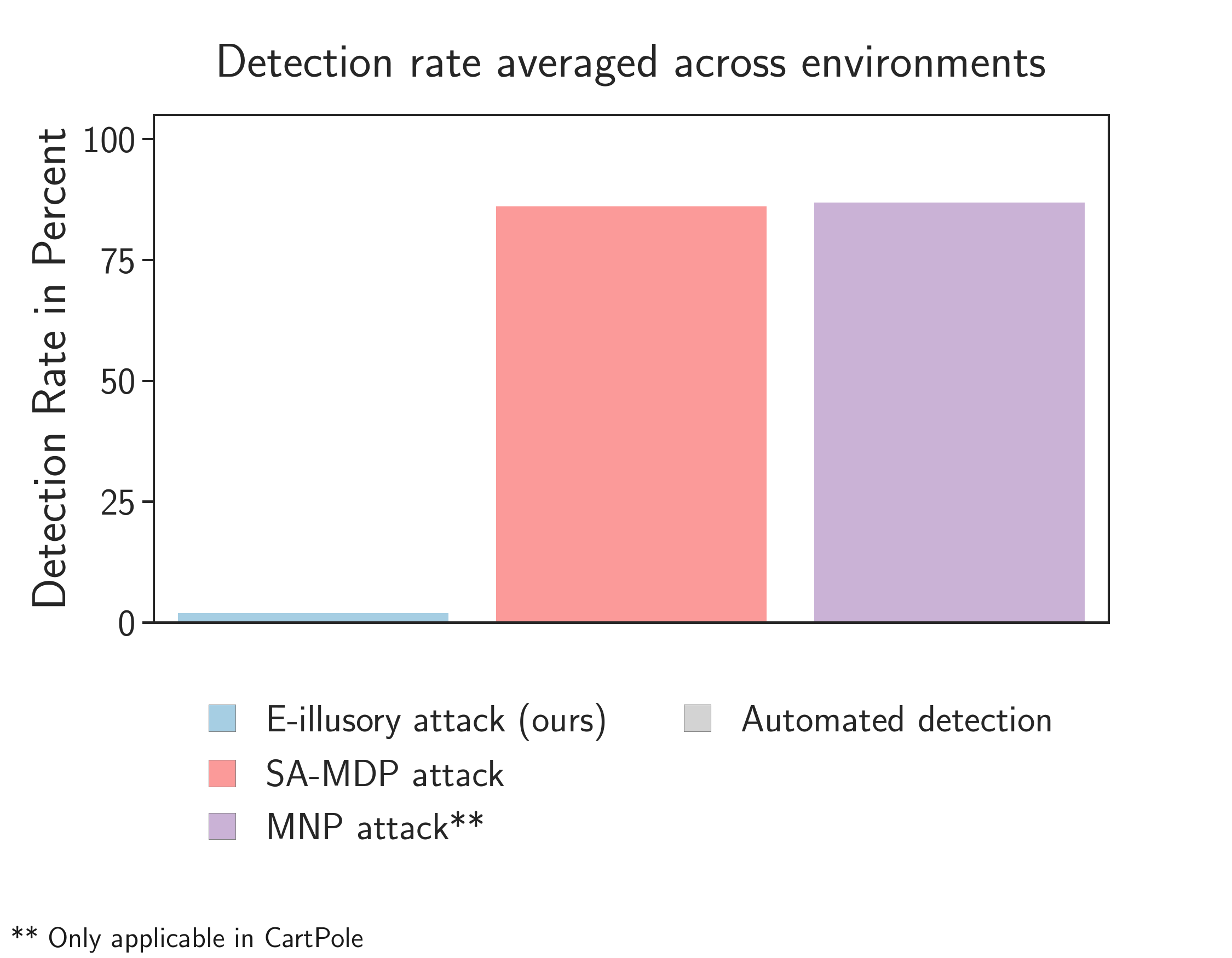}
        \captionof{figure}{Detection results for $B=0.05$. Different adversarial attacks are shown on the x-axis, with detection rates on the y-axis. We see that the automated reliably detector detects SA-MDP and MNP attacks, while \eattacks{} are less likely to be detected.}
        \label{fig:app_detection}
    \end{minipage}\hfill
    \begin{minipage}{0.48\textwidth}
        \centering
        \captionof{table}{Adversary scores under different attacks and defenses.}
        \label{tab:adversary_scores}
        \resizebox{\linewidth}{!}{%
        \begin{tabular}{lllll}
            \toprule
            & & \multicolumn{3}{c}{Norm. adversary [\%]}\\
            \cmidrule[1pt](lr){3-5} 
            Attack  & Budget $B$ & \rotninety{no defence} & \rotninety{smoothing} & \rotninety{ATLA} \\ 
            \midrule
            MNP~\citep{kumar_policy_2021} & 0.05 & 3 $\pm$ 7 & 64 $\pm$ 6 & -\\
            SA-MDP~\citep{zhang_robust_2021} & 0.05 & 85 $\pm$ 7 & 50 $\pm$ 5 & 75 $\pm$ 4\\
            MNP~\citep{kumar_policy_2021} & 0.2 & 97 $\pm$ 3 & 97 $\pm$ 3 & - \\
            SA-MDP~\citep{zhang_robust_2021} & 0.2 & 87 $\pm$ 6 & 72 $\pm$ 3 & 79 $\pm$ 6 \\
            \bottomrule
        \end{tabular}}
    \end{minipage}
\end{figure}

\paragraph{Computational overhead of \eattacks{}.}
Note that there is no computational overhead of our method at test-time. We found in our experiments that the computational overhead during training of the adversarial attack scaled with the quality of the learned attack. In general, we found that the training wall-clock time for the \eattacks{} attacks results presented in Table 1 was about twice that of the SA-MDP attack (note that MNP attacks and perfect illusory attacks do not require training).

\subsubsection{Results for perturbation budget 0.05}

We show the remaining results for a perturbation budget of $B=0.05$ in Figures~\ref{fig:app_adv_corrected_results} and~\ref{fig:app_detection}. 
Note that the corresponding Figures in the main paper are for a perturbation budget of $B=0.2$.

\begin{figure}[ht]
    \includegraphics[trim={7.5cm 0 0 0},clip,width=1.1\linewidth]{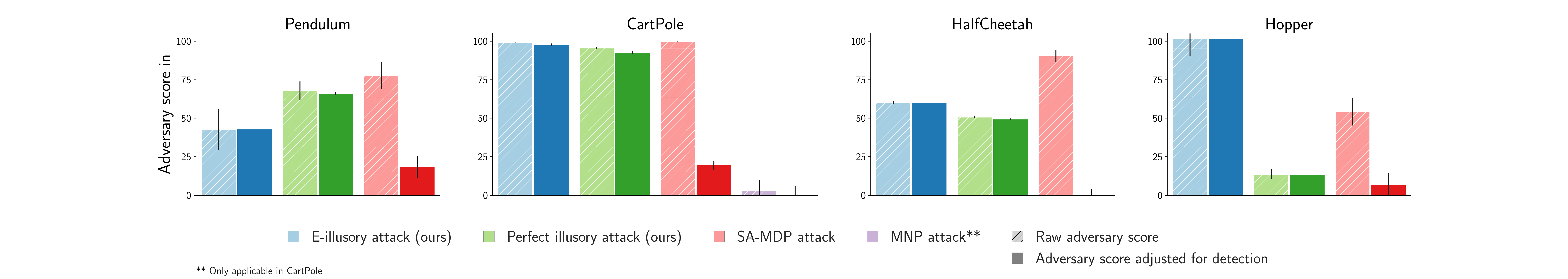}
    \caption{Results for $B=0.05$. We display normalised adversary scores, indicating the reduction in the victim's reward, on the y-axis. Each plot shows results in different environments, with different adversarial attacks on the x-axis. We show both the raw adversary score, as well as the adversary score adjusted for detection rates of different adversarial attacks (see Figure~\ref{fig:detection_results}). While the SA-MDP and MNP benchmark attacks achieve higher unadjusted scores, their high detection rates result in significantly lower adjusted scores. Note that MNP attacks perform significantly worse for $B=0.05$, as compared to $B=0.2$ (see Figure~\ref{fig:adv_corrected_results}).}
    \label{fig:app_adv_corrected_results}
\end{figure}

\begin{table*}[h]
  \centering
  \caption{Full results table for all four environments}
  \resizebox{0.6\linewidth}{!}{
      \begin{tabular}{lrrrr}
        \toprule
        attack & budget $\beta$ & Detection Rate & Victim reward\\ 
        \midrule[0.2pt] 
        Pendulum & &  &  \\
        \midrule[0.2pt]
            SA-MDP~\citep{zhang_robust_2021}    & 0.05  & 76.3$\pm$0.05   & -797.2$\pm$69.9   \\
            \eattack{} (ours)                     &       & 0$\pm$0    & -524.1$\pm$104.3    \\
        \midrule[0.2pt]
            SA-MDP~\citep{zhang_robust_2021}    & 0.2   & 100$\pm$0.03   & -1387.0$\pm$119.0   \\
            \eattack{} (ours)                     &       & 3.6$\pm$0.02    & -980.0$\pm$84.0    \\
        \midrule[0.2pt]
            Perfect illusory attack (ours)      & 1     & 3.0$\pm$0.02    & -1204.8$\pm$88.6    \\
        \midrule[0.2pt]   
            unattacked                          &       & 3.2$\pm$0.03    & -189.4  \\
        \midrule[1pt] 
        CartPole & &  &  \\
        \midrule[0.2pt]
            MNP~\citep{kumar_policy_2021}       & 0.05  & 86.9$\pm$0.3   &  485.0$\pm$33.5     \\
            SA-MDP~\citep{zhang_robust_2021}    &       & 80.5$\pm$0.8   & 9.4$\pm$0.2   \\
            \eattack{} (ours)                     &       & 1.5$\pm$0.02    & 12.9$\pm$0.3 \\
        \midrule[0.2pt]
            MNP~\citep{kumar_policy_2021}       & 0.2   & 100$\pm$0   & 18.3$\pm$20.8      \\
            SA-MDP~\citep{zhang_robust_2021}    &       & 100$\pm$0    & 9.3$\pm$0.1   \\
            \eattack{} (ours)                     &       & 3.7$\pm$0.01    & 11.0$\pm$0.5    \\
        \midrule[0.2pt]
            Perfect illusory attack (ours)      & 1     & 3.1$\pm$0.01    & 30.1$\pm$2.2    \\
        \midrule[0.2pt]   
            unattacked                          &       & 3.2$\pm$0.01    & 500.0    \\
        \midrule[1pt] 
        HalfCheetah & &  &  \\
        \midrule[0.2pt]
            SA-MDP~\citep{zhang_robust_2021}    & 0.05  & 100$\pm$0   & -1570.8$\pm$177.4   \\
            \eattack{} (ours)                     &       & 0$\pm$0    & -180.8$\pm$ 50.1    \\
        \midrule[0.2pt]
            SA-MDP~\citep{zhang_robust_2021}    & 0.2   & 100$\pm$0   & -1643.8$\pm$344.8   \\
            \eattack{} (ours)                     &       & 0$\pm$0    & -240.6$\pm$ 18.0    \\
        \midrule[0.2pt]
            Perfect illusory attack (ours)      & 1     & 2.9$\pm$0.04    & 5.9 $\pm$36.8  \\
        \midrule[0.2pt]   
            unattacked                          &       & 3.1$\pm$0.02    & 2594.6    \\
        \bottomrule
        Hopper & &  &  \\
        \midrule[0.2pt]
            SA-MDP~\citep{zhang_robust_2021}    & 0.05  & 87.4$\pm$0.02   & 144.1$\pm$265.4   \\
            \eattack{} (ours)                     &       & 0$\pm$0   & 209.4$\pm$90.8    \\
        \midrule[0.2pt]
            SA-MDP~\citep{zhang_robust_2021}    & 0.2   & 95.6$\pm$0.02   & -761.5$\pm$127.4   \\
            \eattack{} (ours)                     &       & 1.56$\pm$0.4    & -260.9‡140.8   \\
        \midrule[0.2pt]
            Perfect illusory attack (ours)      & 1     & 3.1$\pm$0.02    & 679.2‡63.9  \\
        \midrule[0.2pt]   
            unattacked                          &       & 2.8$\pm$0.08    & 958.1  \\
        \bottomrule
      \end{tabular}
  }
\end{table*}
\label{tab:results_all_app}

\subsection{Human study}\label{app:human_study}\label{app:human}

\paragraph{Study approval.}
Our study was approved by an independent ethics committee under reference R84123/RE001.
\paragraph{Setup.}
We performed a controlled study with $n=10$ human participants. All participants were graduate-level university students. None had prior knowledge about the objective of the study. Participants participated voluntarily; we estimate the time needed per participant to be around 15 minutes. Participants were handed a slide show which contained all relevant information. This slide show is included in the supplementary material in the respective folder. We further add the sheet with ground truth labels for all video sequences.

After consenting to participate, participants were provided with the slide show and an online sheet to indicate their answers. The study was self-paced and done by the participants without further supervision. We further advised participants not to do the study if they would not feel comfortable at any point in time.

The first 18 videos in the slide show are 100-frame long sequences of the \textit{Pendulum} environment, i.e. six each for either no adversarial attack, an \eattack{} (for which we added an $l_\infty$ penalty to the $l_2$ penalty in Algorithm~\ref{algo:simplified_attack}) or an SA-MDP attack~\citep{zhang_robust_2021}. The subsequent 24 10-frame long sequences of the \textit{CartPole} include the same attacks as for \textit{Pendulum}, plus an additional six sequences under an MNP attack~\citep{kumar_policy_2021}. Note that MNP attacks cannot be applied to the \textit{Pendulum} environment, as it has a continuous action space. For all attacks, the budget $B$ was 0.2. Within both groups, the order of the videos shown was randomised. 
Subjects had to choose for each video between \textit{correct} and \textit{false}, where \textit{false} indicates that subjects find that the system observed in the respective video is not the same as the one observed in the introductory video. We likewise refer to a video marked as \textit{false} as a \textit{suspicious} video. Table~\ref{tab:results_human_full} includes all empirically estimated probabilities and standard deviations for classifying a segment from a given class as \textit{false}.

\paragraph{Result analysis.}
We find that subjects reported one-third of unattacked sequences as {\it false}. Subjects classified sequences under an MNP or SA-MDP~\citep{zhang_robust_2020} attack as \textit{false} with a probability higher than 80\%, while categorizing illusory attacks as {\it false} with a probability of only 32.4\%. We computed the double-sided z-statistics and were able to reject both the hypothesis that $P$(false $\mid$ SA-MDP)=$P$(false $\mid$ no attack) and the hypothesis that $P$(false $\mid$ MNP)=$P(\text{false}\mid\text{no attack})$ for $\alpha=0.05$, while the hypothesis that $P$(false $\mid$ $\mathcal{E}$-\text{illusory attack} )=$P$(false $\mid$ no attack)
cannot be rejected. 
We conclude that subjects were able to distinguish SA-MDP and MNP attacks from unattacked sequences while being unable to distinguish illusory attacks from unattacked sequences.

\begin{table}
\centering
        \caption{Results from our study with human participants.}
    \resizebox{0.5\linewidth}{!}{
    \begin{tabular}{lccc} 
        & \multicolumn{3}{c}{Environment}\\
        \cmidrule(lr){2-4}
        & both & Pendulum & CartPole \\
        \toprule 
        $P(\text{false} \mid \text{no attack})$    & $34.2 \pm 11.4$      & $31.5 \pm 10.5$    & $37.0  \pm 12.3$\\
        $P(\text{false} \mid \text{SA-MDP})$           & $81.4 \pm 27.2$      & $96.3 \pm 32.1$    & $66.7   \pm 22.2$\\
        $P(\text{false} \mid$ \eattack{} $)$     & $32.4 \pm 10.8$      & $37.0 \pm 12.3$    & $27.7   \pm 9.3$\\
        $P(\text{false} \mid \text{MNP})$          & $83.3 \pm 27.8$      &                 & $83.3  \pm 27.8$\\
        \bottomrule
    \end{tabular}}
    \label{tab:results_human_full}
    \end{table}

\subsection{Runtime comparison}\label{app:sec:runtime}
We investigate wall-clock time for training different adversarial attacks. We first recall that MNP attacks~\citep{kumar_policy_2021} as well as perfect illusory attacks do not require training. For SA-MDP attacks~\citep{zhang_robust_2021} and \eattacks{}, training time is highly dependent on the complexity of the environment, with lower training times for the CartPole and Pendulum environments, and higher training times for Hopper and HalfCheetah environments. All reported times are measured using an NVIDIA GeForce GTX 1080 and an Intel Xeon Silver 4116 CPU. We trained SA-MDP attacks for 6 hours, and 12 hours in the simpler and more complex environments respectively. We trained \eattacks{} for 10 hours, and 20 hours in the simpler and more complex environments respectively.
At test-time, inference times for \eattacks{} are identical to SA-MDP attacks as they only consist of a neural network forward pass. Memory requirements are identical.

\end{document}